\newtheorem{proposition}{Proposition}
\begin{document}

\title{Towards Consistency and Complementarity: A Multiview Graph Information Bottleneck Approach}
\author{Xiaolong~Fan,
        Maoguo~Gong,~\IEEEmembership{Senior Member,~IEEE}, 
        Yue~Wu,~\IEEEmembership{Member,~IEEE},\\
        Mingyang~Zhang~\IEEEmembership{Member,~IEEE},
        Hao~Li,
        and Xiangming~Jiang

\thanks{X. Fan, M. Gong (corresponding author), M. Zhang, H. Li, and X. Jiang are with the School of Electronic Engineering, Key Laboratory of Intelligent Perception and Image Understanding, Ministry of Education, Xidian University, Xi'an, Shaanxi province, China, 710071. (e-mail: xiaolongfan@outlook.com; gong@ieee.org; haoli@xidian.edu.cn; myzhang@xidian.edu.cn; xmjiang@xidian.edu.cn)}
\thanks{Y. Wu is with the School of Computer Science and Technology, Xidian University, Xi'an, Shaanxi province, China, 710071. (e-mail: ywu@xidian.edu.cn)}}

\markboth{Preview}%
{}

\maketitle

\begin{abstract}
	The empirical studies of Graph Neural Networks (GNNs) broadly take the original node feature and adjacency relationship as singleview input, ignoring the rich information of multiple graph views. To circumvent this issue, the multiview graph analysis framework has been developed to fuse graph information across views. How to model and integrate shared (i.e. consistency) and view-specific (i.e. complementarity) information is a key issue in multiview graph analysis. In this paper, we propose a novel \underline{M}ultiview \underline{V}ariational \underline{G}raph \underline{I}nformation \underline{B}ottleneck (MVGIB) principle to maximize the agreement for common representations and the disagreement for view-specific representations. Under this principle, we formulate the common and view-specific information bottleneck objectives across multiviews by using constraints from mutual information. However, these objectives are hard to directly optimize since the mutual information is computationally intractable. To tackle this challenge, we derive variational lower and upper bounds of mutual information terms, and then instead optimize variational bounds to find the approximate solutions for the information objectives. Extensive experiments on graph benchmark datasets demonstrate the superior effectiveness of the proposed method.
\end{abstract}

\begin{IEEEkeywords}
Graph data mining, multiview graph representation learning, graph neural networks, deep neural networks.
\end{IEEEkeywords}

\IEEEpeerreviewmaketitle

\section{Introduction}
\IEEEPARstart{A}{s} a ubiquitous data structure, graph is capable of modeling real-world systems in numerous domains, ranging from social network analysis \cite{liu2022nowhere,fan2019graph}, natural language processing \cite{zhao2021weighted,yao2019graph}, computer vision \cite{zhou2021event,zhang2019latentgnn}, and other domains. In recent years, as a powerful tool for learning and analyzing graph data, Graph Neural Networks (GNNs) \cite{scarselli2008graph,vgae,mpnn,gcn,gin} have received tremendous research attention and have been widely employed for graph analysis tasks. The common graph neural networks broadly follow the message passing framework \cite{mpnn} which involves a message passing phase and a readout phase. The message passing first iteratively aggregates the neighbor representations of each node to generate new node representations, and then the readout phase capture the global graph information from node representation space to generate the graph representation. The success of GNNs can be attributed to their ability to simultaneously exploit the rich information inherent in the singleview graph topology structure and input node attributes. 

However, for real-world applications, graph data are often manifested as multiple types of sources or different topology subsets, which can be naturally organized as multiview graph data. For example, a multiplex network \cite{park2020unsupervised,hdmi} contains multiple systems of the same set of nodes, and there exists various types of relationships among nodes, which can be seen as different topology subsets. Furthermore, existing approaches may confront challenges, such as difficulty in obtaining labeled data and generalization bias under the supervised learning setting \cite{liu2021self}. One way to alleviate these issues is unsupervised representation learning on graph. \textit{Therefore, how to integrate different graph views into low-dimensional representations for downstream tasks in an unsupervised manner becomes a fundamental problem for graph representation learning}.

Recently, several multiview graph representation learning approaches have been proposed to effectively explore the multiview graph data. For instance, Adaptive Multi-channel Graph Convolutional Networks (AM-GCN) \cite{amgcn} show that performing graph convolutional operation over both topology and feature views can improve the performance of node representation learning, where the feature view can be generated by distance based K-Nearest Neighbor algorithm. Note that AM-GCN works in the supervised learning manner and therefore may suffer from the challenge of annotating graphs. Contrastive Multiview Graph Representation Learning (MVGRL) \cite{mvgrl} presents an unsupervised mutual information maximization approach for learning node and graph representations by contrasting graph views, where the additional graph view is generated by structural augmentations such as Personalized PageRank diffusion and heat diffusion strategies. Specifically, MVGRL maximizes the mutual information between two views by contrasting node representations from one view with graph representation from the other view, but neglects to explicitly distinguish the common and view-specific information across multiviews. Recent work \cite{wan2021multi} has shown that explicitly modeling common and view-specific information can improve the performance of multiview representation model.

In this paper, we focus on the in-depth analysis of multiview graph representation learning and aim to answer the question that how to extract common and view-specific information in an unsupervised manner for graph representation learning. Inspired by the idea of mutual information constraints in information bottlenecks \cite{oib,dvib,gib,sun2022graph}, we first formulate the information bottleneck objective to encourage that the latent representation contains as much information as possible about the corresponding input view and as less information as possible about the other view to explore the complementary information across multiple graph views. Then, to further explore the complex associations among different graph views, we also formulate an information bottleneck objective to retain as much information as possible about the corresponding input view and the other view simultaneously. Note that, different from previous information bottleneck methods \cite{dvib,gib,sib} that discard superfluous information for the given task, the proposed method focuses on using mutual information to constrain the latent graph representations to model consistency and complementarity across graph multiviews. 

However, the mutual information terms are challenging to calculate in practice, since they require computing the intractable posterior distribution. To tackle this challenge, we follow the variational inference methods \cite{dvib,poole2019variational} and derive the variational lower bounds of mutual information terms that need to be maximized. For the mutual information terms that need to be minimized, we adopt Contrastive Log-ratio Upper Bound (CLUB) \cite{club} as the upper bound approximation. To enable the computation of CLUB, a key challenge is to model the intractable conditional distribution. Here, we employ the classical graph neural network encoder to variationally approximate conditional distribution. By combining the derived lower and upper bounds, we can easily optimize the overall objective by using the stochastic gradient descent algorithm. 

Extensive validation experiments on seven graph benchmark datasets demonstrate the superior effectiveness of the proposed method. Specifically, we evaluate the proposed method on the graph classification and graph clustering tasks, and also perform the ablation study to investigate the performance of certain components in the proposed method to understand the contribution to the overall system. 

To summarize, we outline the main contributions in this paper as below:   
\begin{itemize}
	\item[1.] We propose a novel Multiview Variational Graph Information Bottleneck (MVGIB) principle to maximize the agreement for common representations and the disagreement for view-specific representations.  
	\item[2.] We present the variational lower and upper bounds of the mutual information terms in MVGIB by exploiting the variational approximation method.
	\item[3.] Extensive validation experiments and ablation studies on seven graph benchmark datasets demonstrate the superior effectiveness of the proposed method.
\end{itemize}

The rest of this paper is organized as follows. In section \ref{s2}, we make a brief review of the related works of graph neural networks and information bottleneck. Section \ref{s3} describes the preliminaries. Section \ref{s4} gives the proposed method by introducing representation model towards complementarity and consistency, and then discusses the optimize and inference. Section \ref{s5} evaluates the proposed method with extensive experiments on graph representation benchmark datasets. Finally, section \ref{s6} concludes this paper and discusses the future work.          

\section{Related Work} \label{s2}
\subsection{Graph Neural Network}
Graph Neural Networks (GNNs) aim to model the non-Euclidean graph data structure and have been demonstrated to achieve state-of-the-art performance on graph analysis tasks. As a unified framework for graph neural networks, graph message passing neural network \cite{mpnn,propagation} generalizes several existing representative graph neural networks, such as Graph Convolutional Networks (GCN) \cite{gcn}, Graph Isomorphism Network (GIN) \cite{gin}, Deep Hierarchical Layer Aggregation (DHLA) \cite{fan2021deep}, and others. In recent years, several label-free contrastive self-supervised graph representation learning methods have been developed, such as Deep Graph Infomax (DGI) \cite{dgi}, Contrastive Multiview Graph Representation Learning (MVGRL) \cite{mvgrl}, InfoGraph \cite{infograph}, and have achieved competitive performance, even exceeding supervised graph representation learning. In this paper, we use the graph isomorphism network as graph encoder to generate the graph representation.

\subsection{Information Bottleneck} 
The information bottleneck methods for deep neural networks \cite{dvib} are developed to generate a compact but informative representation. Under this setting, if latent representation discards information from the input which is not useful for a given task, it can increase robustness for the downstream tasks. In recent years, several information bottleneck based graph representation learning methods have been proposed, such as Graph Information Bottleneck (GIB) \cite{gib}, Subgraph Information Bottleneck (SIB) \cite{sib}, and Variational Information Bottleneck for Graph Structure Learning (VIB-GSL) \cite{sun2022graph}. In this paper, we follow the idea of mutual information constraints in information bottlenecks and develop the multiview graph information bottleneck for graph representation learning.

\begin{figure*}[]
	\centering
	\includegraphics[width=1.97\columnwidth]{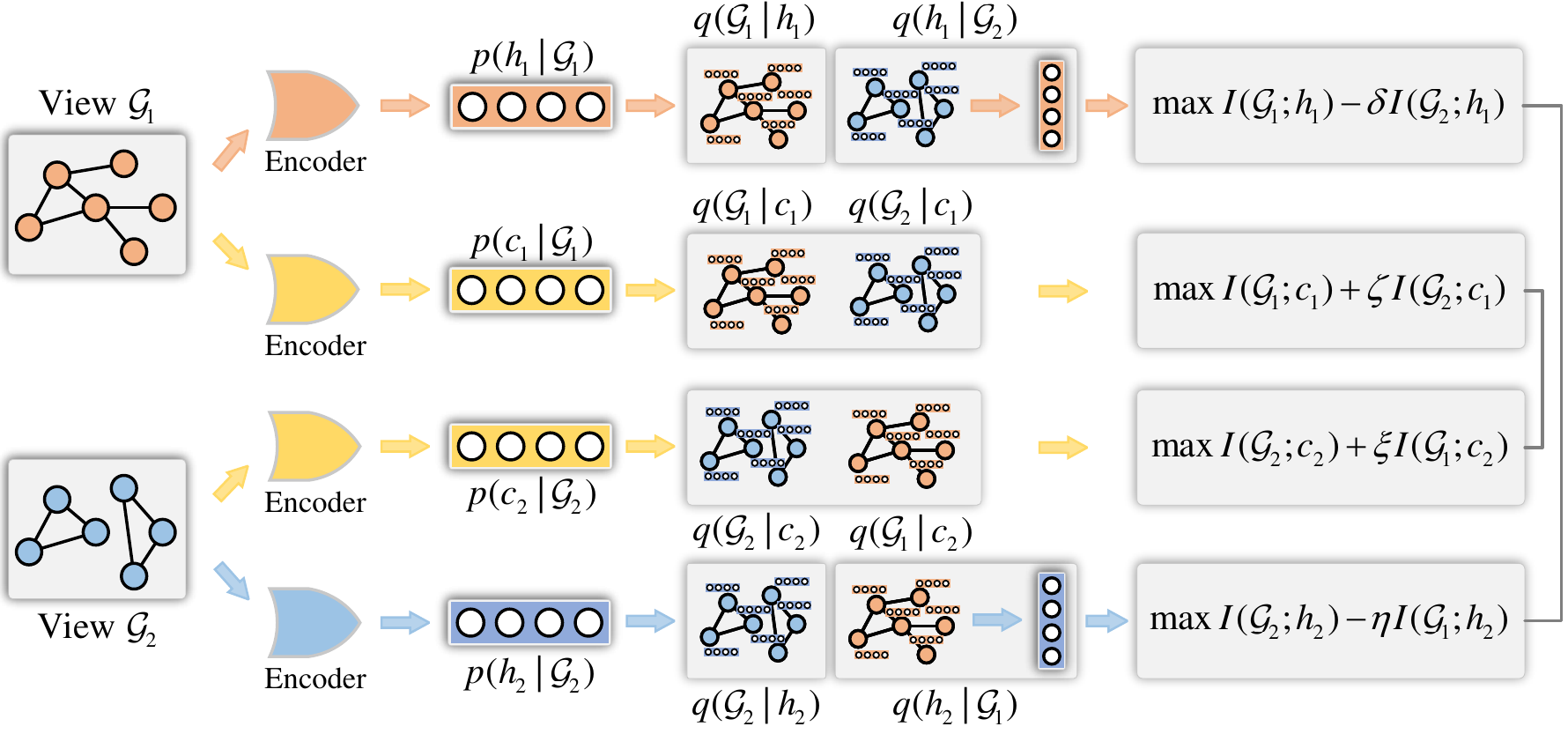}
	\caption{The overall framework of the proposed approach. Here, $\mathcal{G}_1$ with $(A_1, X)$ and $\mathcal{G}_2$ with $(A_2, X)$ to be two input graph views. To extract consistent relationships between multiview graphs, the proposed approach maximizes the mutual information between input graph and the common representation, while maximizing the mutual information between $\mathcal{G}_2$ and the common representation $c_1$. To extract complementarity relationships between multiview graphs, the proposed approach maximizes the mutual information between $\mathcal{G}_1$ and $h_1$, while minimizing the mutual information between $\mathcal{G}_2$ and $h_1$.}
	\label{framework}
\end{figure*} 

\section{Preliminaries} \label{s3}
This section first introduces notations and then discusses the recent variational information bottleneck method.

\subsection{Notations}
We first introduce the notations throughout this paper. Let $\mathcal{G} = (\mathcal{V}, \mathcal{E})$ be a graph with $\mathcal{V}$ and $\mathcal{E}$ denoting the node set and edge set, respectively. The nodes are described by the adjacency matrix $A \in \mathbb{R}^{n\times n}$ with the number of nodes $n$ where $A_{ij} = 1$ if the edge between $i$-th node and $j$-th node exists and else $A_{ij} = 0$, and the node feature matrix $X \in \mathbb{R}^{n\times d}$ with the number of feature $d$ per node. Under the multiview graph input setting, suppose the number of graph views is $r$, the graph contains a set of adjacency matrices, i.e., $\mathcal{A} = \{A_1, A_2, ..., A_r\}$, and a set of node features, i.e., $\mathcal{X} = \{X_1, X_2, ..., X_r\}$. 
Following the frequent notations in information theory, we use $H(x) = -\mathbb{E}_{p(x)}\log p(x)$ denoting Shannon entropy, $H(p,q) = -\mathbb{E}_{q(x)}\log p(x)$ denoting cross entropy, $I(x;y) = \mathbb{E}_{p(x,y)}\log \frac{p(x,y)}{p(x)p(y)}$ denoting mutual information, and $\mathcal{D}_{KL}(p\Vert q) = \mathbb{E}_{p(x)}\log \frac{p(x)}{q(x)}$ denoting Kullback-Leiber divergence.

\subsection{Information Bottleneck}
The Information Bottleneck (IB) method \cite{oib} is introduced as an information theoretic principle for extracting compact but informative representations. Given the input data $x$, representation $z$, and predict label $y$, the IB method maximizes the mutual information between representation $z$ and predicted label $y$, while constraining the mutual information between representation $z$ and input data $x$, i.e.,   
\begin{gather}
	\max I(z;y) \,\,\,\, \text{s.t.}\ I(x;z) \leq I_c \label{e1}
\end{gather} 
where $I_c$ denotes the information constraint. By introducing Lagrange multiplier $\alpha$, Equation \ref{e1} can be transformed as 
\begin{gather}
	\max I(z;y) - \alpha I(x;z). \label{e2}
\end{gather}
Intuitively, the first term of Equation \ref{e2} encourages $z$ to be predictive of $y$ and the second term encourages $z$ to forget $x$, resulting in the minimal but sufficient encoding $z$ for predicting $y$. However, due to the computational challenge of mutual information, the IB objective is hard to directly optimized. To compensate for this, recent work \cite{dvib} proposes a feasible way to approximate IB objective by incorporating variational inference in form of 
\begin{gather}
	\mathbb{E}_{p(x,y)} \mathbb{E}_{p(z\vert x)} \log q(y\vert z)
	-\alpha \mathbb{E}_{p(x)} \mathbb{E}_{p(z\vert x)} \log \frac{p(z\vert x)}{r(z)} 
\end{gather}
where $q(y\vert z)$ and $r(z)$ are the variational approximations to $p(y\vert z)$ and marginal distribution $p(z)$, respectively. 

\section{Proposed Approach} \label{s4}
As a motivating example, suppose $\mathcal{G}_1$ with $(A_1, X_1)$ and $\mathcal{G}_2$ with $(A_2, X_2)$ to be two input graph views. Without loss of generality, we assume that the different graph views differ only in their adjacency relationship, i.e., $X_1 = X_2$. To extract consistent relationship and complementary information between multiview graph data, we propose Multiview Variational Graph Information Bottleneck (MVGIB) by extending the information bottleneck principle to the unsupervised multiview graph representation setting. The overall framework of MVGIB is shown in Figure \ref{framework}.     

\subsection{Towards Complementarity}
Given two input graph views $\mathcal{G}_1$ and $\mathcal{G}_2$ of graph $\mathcal{G}$, complementarity principle aims to generate the view-specific representation $h_1$ and $h_2$ for the given graph views $\mathcal{G}_1$ and $\mathcal{G}_2$, respectively. Taking $f_{enc}: \mathcal{G}_1 \to h_1$ as an example, we expect to maximize the mutual information between $\mathcal{G}_1$ and $h_1$, while minimizing the mutual information between $\mathcal{G}_2$ and $h_1$, i.e., 
\begin{gather} \label{e26}
	\max I(\mathcal{G}_1;h_1) - \delta I(\mathcal{G}_2; h_1)
\end{gather} 
where $\delta$ is the trade-off parameter to balance $I(\mathcal{G}_1;h_1)$ and $I(\mathcal{G}_2; h_1)$. Analogously, we can formulate the information bottleneck objective for $f_{enc}: \mathcal{G}_2 \to h_2$, i.e.,  
\begin{gather} \label{e27}
	\max I(\mathcal{G}_2;h_2) - \eta I(\mathcal{G}_1; h_2)
\end{gather} 
where $\eta$ is the trade-off parameter. Intuitively, maximizing Equation \ref{e26} and Equation \ref{e27} encourage representation $h_1$ and $h_2$ to maximally express the input graph views generated from, while maximally compressing the other graph views, i.e., towards complementarity. However, the mutual information is computationally intractable since it requires computing the intractable posterior distribution, resulting in these objectives being hard to optimize. To tackle this challenge, we introduce the variational inference approach \cite{dvib} to derive the variational lower bounds of $I(\mathcal{G}_1;h_1)$ and $I(\mathcal{G}_2;h_2)$ and the variational upper bounds of $I(\mathcal{G}_2;h_1)$ and $I(\mathcal{G}_1;h_2)$, as shown in Proposition \ref{pro1} and Proposition \ref{pro2}.

\begin{proposition}[Lower bounds of $I(\mathcal{G}_1;h_1)$ and $I(\mathcal{G}_2;h_2)$] \label{pro1}
	For mutual information $I(\mathcal{G}_1;h_1)$ and $I(\mathcal{G}_2;h_2)$, we have 
	\begin{gather}
		I(\mathcal{G}_1;h_1) \geq \mathbb{E}_{p(\mathcal{G}_1)}\mathbb{E}_{p(h_1\vert \mathcal{G}_1)}\log q(\mathcal{G}_1\vert h_1) \\
		I(\mathcal{G}_2;h_2) \geq \mathbb{E}_{p(\mathcal{G}_2)}\mathbb{E}_{p(h_2\vert \mathcal{G}_2)}\log q(\mathcal{G}_2\vert h_2)
	\end{gather}
	where $q(\mathcal{G}_1\vert h_1)$ and $q(\mathcal{G}_2\vert h_2)$ are the variational approximations of $p(\mathcal{G}_1\vert h_1)$ and $p(\mathcal{G}_2\vert h_2)$, respectively.
\end{proposition}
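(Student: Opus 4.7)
The plan is to derive both bounds via the standard Barber--Agakov variational construction, which exploits the non-negativity of the Kullback--Leibler divergence to swap the intractable true posterior $p(\mathcal{G}_1\vert h_1)$ for the tractable variational surrogate $q(\mathcal{G}_1\vert h_1)$. Because the two inequalities are structurally identical, I would prove the bound for $I(\mathcal{G}_1;h_1)$ in full and then invoke symmetry for $I(\mathcal{G}_2;h_2)$.

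First I would rewrite the mutual information using the entropy decomposition
\begin{gather}
I(\mathcal{G}_1;h_1) = H(\mathcal{G}_1) - H(\mathcal{G}_1 \vert h_1) = H(\mathcal{G}_1) + \mathbb{E}_{p(\mathcal{G}_1,h_1)}\log p(\mathcal{G}_1 \vert h_1).
\end{gather}
Next, for any variational distribution $q(\mathcal{G}_1\vert h_1)$, I would insert $\log q(\mathcal{G}_1\vert h_1) - \log q(\mathcal{G}_1\vert h_1)$ inside the expectation and regroup so that the difference $\log p(\mathcal{G}_1\vert h_1) - \log q(\mathcal{G}_1\vert h_1)$ assembles into a conditional KL term. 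This gives the identity
\begin{gather}
\mathbb{E}_{p(\mathcal{G}_1,h_1)}\log p(\mathcal{G}_1\vert h_1) = \mathbb{E}_{p(\mathcal{G}_1,h_1)}\log q(\mathcal{G}_1\vert h_1) + \mathbb{E}_{p(h_1)}\mathcal{D}_{KL}\bigl(p(\mathcal{G}_1\vert h_1)\,\Vert\,q(\mathcal{G}_1\vert h_1)\bigr).
\end{gather}
Dropping the non-negative KL term and factoring $p(\mathcal{G}_1,h_1)=p(\mathcal{G}_1)p(h_1\vert \mathcal{G}_1)$ yields
\begin{gather}
I(\mathcal{G}_1;h_1) \geq H(\mathcal{G}_1) + \mathbb{E}_{p(\mathcal{G}_1)}\mathbb{E}_{p(h_1\vert \mathcal{G}_1)}\log q(\mathcal{G}_1\vert h_1).
\end{gather}

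The remaining step, and the only mildly subtle one, is discarding $H(\mathcal{G}_1)$ to arrive at the form stated in the proposition. Two justifications are available: either treat $H(\mathcal{G}_1)$ as a data-dependent constant that does not affect the optimization over the encoder and variational decoder, or, under the standing assumption that $\mathcal{G}_1$ is discrete (as is the case here, since the graph is specified by a binary adjacency matrix and features), invoke $H(\mathcal{G}_1)\geq 0$ to absorb the term while preserving the inequality. I would prefer the second route, since it makes the bound literally hold as written. The proof of the second inequality then follows by interchanging the roles of $(\mathcal{G}_1,h_1)$ and $(\mathcal{G}_2,h_2)$, with $q(\mathcal{G}_2\vert h_2)$ playing the role of the variational approximation. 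No step should pose a genuine obstacle; the only care required is to be explicit about which expectation the KL correction lives under, so that the final bound is expressed in the encoder-friendly form $\mathbb{E}_{p(\mathcal{G}_1)}\mathbb{E}_{p(h_1\vert \mathcal{G}_1)}[\,\cdot\,]$ that can be Monte-Carlo estimated during training.
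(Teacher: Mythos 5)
Your proof is correct and follows essentially the same Barber--Agakov route as the paper: both arguments rest on the non-negativity of $\mathcal{D}_{KL}\bigl(p(\mathcal{G}_1\vert h_1)\,\Vert\,q(\mathcal{G}_1\vert h_1)\bigr)$ to replace the true posterior by the variational surrogate, arriving at $I(\mathcal{G}_1;h_1) \geq \mathbb{E}_{p(\mathcal{G}_1,h_1)}\log q(\mathcal{G}_1\vert h_1) + H(\mathcal{G}_1)$ before dropping the entropy term. One small improvement in your writeup: the paper only justifies discarding $H(\mathcal{G}_1)$ on the grounds that it is independent of the optimization variables, which makes the displayed inequality a practical surrogate rather than a literal bound; your second justification, that $H(\mathcal{G}_1)\geq 0$ because the graph is discrete, makes the proposition true exactly as stated and is the cleaner choice.
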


\begin{proof}
	Let $q(\mathcal{G}_1\vert h_1)$ be a variational approximation to $p(\mathcal{G}_1\vert h_1)$. Using the fact that Kullback-Leiber divergence is always non-negative, we have
	\begin{gather}
		\mathcal{D}_{KL}\left(p(\mathcal{G}_1\vert h_1)\Vert q(\mathcal{G}_1\vert h_1)\right) \geq 0 \\
		\mathbb{E}_{p(\mathcal{G}_1\vert h_1)} \log \frac{p(\mathcal{G}_1\vert h_1)}{q(\mathcal{G}_1\vert h_1)} \geq 0 \\
		\mathbb{E}_{p(\mathcal{G}_1\vert h_1)} \log p(\mathcal{G}_1\vert h_1) \geq \mathbb{E}_{p(\mathcal{G}_1\vert h_1)} \log q(\mathcal{G}_1\vert h_1).
	\end{gather}
	Then, according to the definition of mutual information, we have
	\begin{equation}
		\begin{aligned}
			I(\mathcal{G}_1;h_1) &\geq \mathbb{E}_{p(\mathcal{G}_1,h_1)}\log \frac{q(\mathcal{G}_1\vert h_1)}{p(\mathcal{G}_1)}\\
			&= \mathbb{E}_{p(\mathcal{G}_1,h_1)}\log q(\mathcal{G}_1\vert h_1) - \mathbb{E}_{p(\mathcal{G}_1)}\log p(\mathcal{G}_1) \\
			&= \mathbb{E}_{p(\mathcal{G}_1,h_1)}\log q(\mathcal{G}_1\vert h_1) + H(\mathcal{G}_1).
		\end{aligned}
	\end{equation}
	Note that the entropy $H(\mathcal{G}_1)$ is independent of the objective optimization and can be ignored, hence we obtain
	\begin{equation}
		\begin{aligned}
			I(\mathcal{G}_1,h_1) &\geq \mathbb{E}_{p(\mathcal{G}_1,h_1)}\log q(\mathcal{G}_1\vert h_1) \\
			&= \mathbb{E}_{p(\mathcal{G}_1)}\mathbb{E}_{p(h_1\vert \mathcal{G}_1)}\log q(\mathcal{G}_1\vert h_1).
		\end{aligned}
	\end{equation}
	Similarly, we can derive the lower bound of $I(\mathcal{G}_2;h_2)$ as
	\begin{gather}
		I(\mathcal{G}_2;h_2) \geq \mathbb{E}_{p(\mathcal{G}_2)}\mathbb{E}_{p(h_2\vert \mathcal{G}_2)}\log q(\mathcal{G}_2\vert h_2).
	\end{gather}
\end{proof}

\begin{proposition}[Upper bounds of $I(\mathcal{G}_2;h_1)$ and $I(\mathcal{G}_1;h_2)$] \label{pro2}
	For mutual information $I(\mathcal{G}_2;h_1)$ and $I(\mathcal{G}_1;h_2)$, we have
	\begin{gather}
		I(\mathcal{G}_2;h_1) \leq \mathbb{E}_{p(\mathcal{G}_2,h_1)} \log q(h_1\vert \mathcal{G}_2)
		- \mathbb{E}_{p(\mathcal{G}_2)p(h_1)} \log q(h_1\vert \mathcal{G}_2)\\
		I(\mathcal{G}_1;h_2) \leq \mathbb{E}_{p(\mathcal{G}_1,h_2)} \log q(h_2\vert \mathcal{G}_1) - \mathbb{E}_{p(\mathcal{G}_1)p(h_2)} \log q(h_2\vert \mathcal{G}_1)
	\end{gather}
	where $q(h_1\vert \mathcal{G}_2)$ and $q(h_2\vert \mathcal{G}_1)$ are the variational approximations of posterior $p(h_1\vert \mathcal{G}_2)$ and $p(h_2\vert \mathcal{G}_1)$, respectively.
\end{proposition}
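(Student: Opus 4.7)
The plan is to follow the Contrastive Log-ratio Upper Bound (CLUB) technique of Cheng et al.\ that the paper already cites. I would first split the mutual information by conditioning, writing
\begin{equation*}
I(\mathcal{G}_2;h_1) = \mathbb{E}_{p(\mathcal{G}_2,h_1)}\log p(h_1\vert\mathcal{G}_2) - \mathbb{E}_{p(h_1)}\log p(h_1),
\end{equation*}
so that the intractable conditional is isolated from the marginal-entropy term.

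Next, I would control the marginal-entropy term using the non-negativity of a Kullback-Leibler divergence between the product of marginals and the joint. A direct expansion of $\mathcal{D}_{KL}\bigl(p(\mathcal{G}_2)p(h_1)\bigm\Vert p(\mathcal{G}_2,h_1)\bigr) \geq 0$, after cancelling $p(\mathcal{G}_2)$ inside the logarithm, rearranges to
\begin{equation*}
\mathbb{E}_{p(h_1)}\log p(h_1) \geq \mathbb{E}_{p(\mathcal{G}_2)p(h_1)}\log p(h_1\vert\mathcal{G}_2).
\end{equation*}
Substituting this back into the expansion above yields the exact CLUB upper bound
\begin{equation*}
I(\mathcal{G}_2;h_1) \leq \mathbb{E}_{p(\mathcal{G}_2,h_1)}\log p(h_1\vert\mathcal{G}_2) - \mathbb{E}_{p(\mathcal{G}_2)p(h_1)}\log p(h_1\vert\mathcal{G}_2),
\end{equation*}
with the true conditional still appearing inside. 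The final step is to swap $p(h_1\vert\mathcal{G}_2)$ for the variational approximation $q(h_1\vert\mathcal{G}_2)$ introduced in the statement, and the second inequality for $I(\mathcal{G}_1;h_2)$ then follows by the symmetric argument with indices $1$ and $2$ exchanged.

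The main obstacle will be this last substitution: the bound with the true conditional drops out cleanly from KL non-negativity, but replacing it with $q(h_1\vert\mathcal{G}_2)$ only remains a valid upper bound when $q$ is close enough to $p$ in the precise log-ratio sense established in the CLUB paper, i.e.\ when $\mathcal{D}_{KL}(p(h_1\vert\mathcal{G}_2)\Vert q(h_1\vert\mathcal{G}_2))$ is sufficiently small. I would handle this either by invoking the vCLUB theorem verbatim or by stating the sufficiency condition explicitly and noting that the amortised graph-neural-network encoder used to learn $q(h_1\vert\mathcal{G}_2)$ is trained precisely to enforce that closeness.
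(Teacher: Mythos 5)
Your derivation is correct and is essentially the same argument the paper gives. Both you and the paper first establish the CLUB bound with the \emph{true} conditional $p(h_1\vert\mathcal{G}_2)$; the only cosmetic difference is that the paper writes the gap $\text{CLUB} - I(\mathcal{G}_1;h_2) = \mathbb{E}_{p(h_2)}\bigl[\log p(h_2) - \mathbb{E}_{p(\mathcal{G}_1)}\log p(h_2\vert\mathcal{G}_1)\bigr]$ and invokes Jensen's inequality on $\log p(h_2) = \log\mathbb{E}_{p(\mathcal{G}_1)}p(h_2\vert\mathcal{G}_1)$, whereas you package exactly that same non-negativity as $\mathcal{D}_{KL}\bigl(p(\mathcal{G}_2)p(h_1)\Vert p(\mathcal{G}_2,h_1)\bigr)\geq 0$; these are the identical inequality in two notations. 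You also correctly flag that the $p\to q$ substitution is the delicate step: the paper states the sufficient condition as $\mathcal{D}_{KL}(p(h_1,\mathcal{G}_2)\Vert q(h_1,\mathcal{G}_2)) \leq \mathcal{D}_{KL}(p(h_1)p(\mathcal{G}_2)\Vert q(h_1,\mathcal{G}_2))$ rather than phrasing it as the conditional KL being ``sufficiently small,'' but both formulations come straight from the CLUB paper and both acknowledge, as you do, that the amortised $q$ is trained by log-likelihood to enforce the required closeness. No gaps.
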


The upper bound of mutual information we utilize here is based on Contrastive Log-ratio Upper Bound (CLUB) \cite{club} with variational approximation, shown in Proposition \ref{pro2}.  Unfortunately, the variational CLUB is no longer guaranteed to remain an upper bound of mutual information under the variational approximation. Following the strategy in CLUB \cite{club}, we provide the detailed derivation of upper bound of $I(\mathcal{G}_2;h_1)$ and $I(\mathcal{G}_1;h_2)$. Taking $I(\mathcal{G}_1;h_2)$ as an example, we can define the contrastive log-ratio upper bound in form of 
\begin{gather} \label{appeq1}
	\mathbb{E}_{p(\mathcal{G}_1,h_2)} \log p(h_2\vert \mathcal{G}_1) - \mathbb{E}_{p(\mathcal{G}_1)p(h_2)} \log p(h_2\vert \mathcal{G}_1).
\end{gather} 
Then we can calculate the difference between $I(\mathcal{G}_1;h_2)$ and contrastive log-ratio upper bound in form of 
\begin{equation}
	\begin{aligned}
		\mathbb{E}&_{p(\mathcal{G}_1,h_2)} \log p(h_2\vert \mathcal{G}_1) - \mathbb{E}_{p(\mathcal{G}_1)p(h_2)} \log p(h_2\vert \mathcal{G}_1)  \\ &-\mathbb{E}_{p(\mathcal{G}_1,h_2)}\log \frac{p(\mathcal{G}_1,h_2)}{p(\mathcal{G}_1)p(h_2)}\\
		= \,&\mathbb{E}_{p(\mathcal{G}_1,h_2)} \log p(h_2\vert \mathcal{G}_1) - \mathbb{E}_{p(\mathcal{G}_1)p(h_2)} \log p(h_2\vert \mathcal{G}_1)  \\
		&- \mathbb{E}_{p(\mathcal{G}_1,h_2)}[\log p(h_2\vert \mathcal{G}_1) - \log p(h_2)]\\
		= \,&\mathbb{E}_{p(\mathcal{G}_1,h_2)} \log p(h_2) - \mathbb{E}_{p(\mathcal{G}_1)}\mathbb{E}_{p(h_2)}\log p(h_2\vert \mathcal{G}_1) \\
		= \,&\mathbb{E}_{p(h_2)}\left[ \log p(h_2) - \mathbb{E}_{p(\mathcal{G}_1)} \log p(h_2\vert \mathcal{G}_1) \right].
	\end{aligned}
\end{equation}
Note that the marginal distribution $p(\mathcal{G}_1)$ can be rewritten as 
\begin{gather}
p(h_2) = \int p(h_2\vert \mathcal{G}_1)p(\mathcal{G}_1)d\mathcal{G}_1.
\end{gather} 
According to Jensen's inequality, we have 
\begin{gather}
	\log p(h_2) = \log \mathbb{E}_{p(\mathcal{G}_1)} p(h_2\vert \mathcal{G}_1) \geq \mathbb{E}_{p(\mathcal{G}_1)} \log p(h_2\vert \mathcal{G}_1).
\end{gather}
Hence, Equation \ref{appeq1} is always non-negative and CLUB is an upper bound of $I(\mathcal{G}_2;h_1)$. Similarly, the upper bound of $I(\mathcal{G}_2;h_1)$ can be defined as  
\begin{gather}
	\mathbb{E}_{p(\mathcal{G}_2,h_1)} \log p(h_1\vert \mathcal{G}_2)
	- \mathbb{E}_{p(\mathcal{G}_2)p(h_1)} \log p(h_1\vert \mathcal{G}_2).
\end{gather}
However, $p(h_1\vert \mathcal{G}_2)$ and $p(h_1\vert \mathcal{G}_2)$ are unknown. We can utilize $q(h_1\vert \mathcal{G}_2)$ and $q(h_2\vert \mathcal{G}_1)$ to variationally approximate $p(h_1\vert \mathcal{G}_2)$ and $p(h_2\vert \mathcal{G}_1)$ in form of 
\begin{gather}
	\mathbb{E}_{p(\mathcal{G}_2,h_1)} \log q(h_1\vert \mathcal{G}_2)
	- \mathbb{E}_{p(\mathcal{G}_2)p(h_1)} \log q(h_1\vert \mathcal{G}_2)\\
	\mathbb{E}_{p(\mathcal{G}_1,h_2)} \log q(h_2\vert \mathcal{G}_1) - \mathbb{E}_{p(\mathcal{G}_1)p(h_2)} \log q(h_2\vert \mathcal{G}_1).
\end{gather}
Note that if   
\begin{gather}
	\mathcal{D}_{KL}(p(h_1,\mathcal{G}_2)\Vert q(h_1,\mathcal{G}_2)) \leq \mathcal{D}_{KL}(p(h_1)p(\mathcal{G}_2)\Vert q(h_1,\mathcal{G}_2)) \label{appeq13}\\ \mathcal{D}_{KL}(p(h_2,\mathcal{G}_1)\Vert q(h_2,\mathcal{G}_1))\leq\mathcal{D}_{KL}(p(h_2)p(\mathcal{G}_1)\Vert q(h_2,\mathcal{G}_1)) \label{appeq14}
\end{gather} 
then variational contrastive log-ratio upper bound can remain a mutual information upper bound. In practice, the variational distribution $q(h_1\vert \mathcal{G}_2)$ and $q(h_2\vert \mathcal{G}_1)$ are implemented with graph neural network. By enlarging the network capacity and maximizing the log-likelihood $\mathbb{E}_{p(h_1, \mathcal{G}_2)}\log q(h_1\vert \mathcal{G}_2)$ and $\mathbb{E}_{p(h_2, \mathcal{G}_1)}\log q(h_2\vert \mathcal{G}_1)$, we can obtain far more accurate approximation $q(h_1\vert \mathcal{G}_2)$ and $q(h_2\vert \mathcal{G}_1)$ for $p(h_1\vert \mathcal{G}_2)$ and $p(h_2\vert \mathcal{G}_1)$, respectively.

By combining the above lower and upper bounds, we can write the objective of achieving complementarity as 
\begin{equation} 
	\begin{aligned}
		\mathcal{L}_{h} =  
		&\mathbb{E}_{p(\mathcal{G}_1)}\mathbb{E}_{p(h_1\vert \mathcal{G}_1)}\log q(\mathcal{G}_1\vert h_1) + \mathbb{E}_{p(\mathcal{G}_2)}\mathbb{E}_{p(h_2\vert \mathcal{G}_2)}\log q(\mathcal{G}_2\vert h_2) \\
		&- \delta \left[ \mathbb{E}_{p(\mathcal{G}_2,h_1)} \log q(h_1\vert \mathcal{G}_2)
		- \mathbb{E}_{p(\mathcal{G}_2)p(h_1)} \log q(h_1\vert \mathcal{G}_2)\right] \\
		&- \eta \left[ \mathbb{E}_{p(\mathcal{G}_1,h_2)} \log q(h_2\vert \mathcal{G}_1) - \mathbb{E}_{p(\mathcal{G}_1)p(h_2)} \log q(h_2\vert \mathcal{G}_1) \right]
	\end{aligned}
	\label{eq_h}
\end{equation}  
where the first and the second terms are graph reconstruction loss, and the third and the fourth terms are the variational contrastive log-ratio upper bound loss. 

To enable the computation, we can utilize Monte Carlo sampling to approximately estimate and instantiate the complementarity objective. For upper bound terms in complementarity objective, we have 
\begin{gather}
	\mathbb{E}_{p(\mathcal{G}_1)}\mathbb{E}_{p(h_1\vert \mathcal{G}_1)}\log q(\mathcal{G}_1\vert h_1) \approx \frac{1}{N}\sum_{i=1}^{N}\mathbb{E}_{p(h_1 \vert \mathcal{G}_1^{i})}\log q(\mathcal{G}_1^{i} \vert h_1)\\
	\mathbb{E}_{p(\mathcal{G}_2)}\mathbb{E}_{p(h_2\vert \mathcal{G}_2)}\log q(\mathcal{G}_2\vert h_2) \approx 
	\frac{1}{N}\sum_{i=1}^{N}\mathbb{E}_{p(h_2 \vert \mathcal{G}_2^{i})}\log q(\mathcal{G}_2^{i}\vert h_2)
\end{gather}
where $N$ denotes the number of total sampled data. For contrastive log-ratio upper bound objective, we have
\begin{equation}
	\begin{aligned}
		&\delta \left[ \mathbb{E}_{p(\mathcal{G}_2,h_1)} \log q(h_1\vert \mathcal{G}_2)
		- \mathbb{E}_{p(\mathcal{G}_2)p(h_1)} \log q(h_1\vert \mathcal{G}_2)\right] \\
		&\approx \, \delta \left[ \frac{1}{N}\sum_{i=1}^{N}\bigg[ \log q(h_1^i\vert \mathcal{G}_2^i) - \frac{1}{N} \sum_{j=1}^{N} \log q(h_1^j\vert \mathcal{G}_2^i)\bigg]\right]
	\end{aligned}
\end{equation}
\begin{equation}
	\begin{aligned}
		&\eta \left[ \mathbb{E}_{p(\mathcal{G}_1,h_2)} \log q(h_2\vert \mathcal{G}_1) - \mathbb{E}_{p(\mathcal{G}_1)p(h_2)} \log q(h_2\vert \mathcal{G}_1) \right]\\
		&\approx \,\eta \left[ \frac{1}{N}\sum_{i=1}^{N}\bigg[ \log q(h_2^i\vert \mathcal{G}_1^i) - \frac{1}{N} \sum_{j=1}^{N} \log q(h_2^j\vert \mathcal{G}_1^i)\bigg]\right]
	\end{aligned}
\end{equation}
where $N$ denotes the number of total sampled data. We suppose the additional graph neural network encoder $q(h_1\vert \mathcal{G}_2)$ and $q(h_2\vert \mathcal{G}_1)$ are parameterized by Gaussian approximation $\mathcal{N}(h_1\vert (\mu(\mathcal{G}_2), \sigma^2(\mathcal{G}_2)))$ and $\mathcal{N}(h_2\vert (\mu(\mathcal{G}_1), \sigma^2(\mathcal{G}_1)))$ respectively, then for the given sample data $\{(h_1^i, \mathcal{G}_2^i)\}_{i=1}^{N}$ and $\{(h_2^i, \mathcal{G}_1^i)\}_{i=1}^{N}$, we have 
\begin{gather}
	\mu_1^i = \mu(\mathcal{G}_2^i), \, \sigma_1^i = \sigma(\mathcal{G}_2^i)\\
	\mu_2^i = \mu(\mathcal{G}_1^i), \, \sigma_2^i = \sigma(\mathcal{G}_1^i).
\end{gather}
Therefore, the CLUB objective can be calculated by
\begin{equation}
	\begin{aligned}
		\mathbb{E}&_{p(\mathcal{G}_2,h_1)} \log q(h_1\vert \mathcal{G}_2)
		- \mathbb{E}_{p(\mathcal{G}_2)p(h_1)} \log q(h_1\vert \mathcal{G}_2)=\\
		&-\frac{1}{2}\left\{ \frac{1}{N}\sum_{i=1}^{N}(h_1^i - \mu_1^i)^{\top}\text{Diag}[{(\sigma_1^i)}^{-2}](h_1^i - \mu_1^i) \right\} \\
		&+ \frac{1}{2} \left\{  \frac{1}{N^2}\sum_{i=1}^{N} \sum_{j=1}^{N} (h_1^j - \mu_1^i)^{\top}\text{Diag}[{(\sigma_1^i)}^{-2}](h_1^j - \mu_1^i) \right\}
	\end{aligned}
\end{equation}
\begin{equation}
	\begin{aligned}
		\mathbb{E}&_{p(\mathcal{G}_1,h_2)} \log q(h_2\vert \mathcal{G}_1)
		- \mathbb{E}_{p(\mathcal{G}_1)p(h_2)} \log q(h_2\vert \mathcal{G}_1)=\\
		&-\frac{1}{2}\left\{ \frac{1}{N}\sum_{i=1}^{N}(h_2^i - \mu_2^i)^{\top}\text{Diag}\big[{(\sigma_2^i)}^{-2}\big](h_2^i - \mu_2^i) \right\} \\
		&+ \frac{1}{2} \left\{  \frac{1}{N^2}\sum_{i=1}^{N} \sum_{j=1}^{N} (h_2^j - \mu_2^i)^{\top}\text{Diag}\big[{(\sigma_2^i)}^{-2}\big](h_2^j - \mu_2^i) \right\}
	\end{aligned}
\end{equation}
where $\text{Diag}[\cdot]$ denotes the diagonal matrix. 

\subsection{Towards Consistency}
Different from complementarity principle, consistency principle aims to extract the common representations $c_1$ and $c_2$ for the given views $\mathcal{G}_1$ and $\mathcal{G}_2$. Taking $f_{enc}: \mathcal{G}_1 \to c_1$ as an example, we expect to maximize the mutual information between $\mathcal{G}_1$ and $c_1$, while maximizing the mutual information between $\mathcal{G}_2$ and $c_1$, i.e., 
\begin{gather}
	\max I(\mathcal{G}_1;c_1) + \zeta I(\mathcal{G}_2;c_1) \label{e4}
\end{gather}
where $\zeta$ is a trade-off parameter to balance $I(\mathcal{G}_1;c_1)$ and $I(\mathcal{G}_2;c_1)$. Analogously, we can formulate the information objective for $f_{enc}: \mathcal{G}_2 \to c_2$, i.e., 
\begin{gather}
	\max I(\mathcal{G}_2;c_2) + \xi I(\mathcal{G}_1;c_2) \label{e5}
\end{gather}
where $\xi$ is the trade-off parameter. Intuitively, maximizing Equation \ref{e4} and Equation \ref{e5} encourage the encoders to maximally express the multiple input views simultaneously, thus being able to model the common information between multiviews, i.e., towards consistency. However, the exact computation of mutual information is still intractable. We follow the variational inference approach and derive the lower bounds of these mutual information terms, as shown in Proposition \ref{pro3} and Proposition \ref{pro4}.
\begin{proposition}[Lower bounds of $I(\mathcal{G}_1;c_1)$ and $I(\mathcal{G}_2;c_2)$] \label{pro3}
	For mutual information $I(\mathcal{G}_1;c_1)$ and $I(\mathcal{G}_2;c_2)$, we have
	\begin{gather}
		I(\mathcal{G}_1;c_1) \geq \mathbb{E}_{p(\mathcal{G}_1)}\mathbb{E}_{p(c_1\vert \mathcal{G}_1)}\log q(\mathcal{G}_1\vert c_1)\\
		I(\mathcal{G}_2;c_2) \geq \mathbb{E}_{p(\mathcal{G}_2)}\mathbb{E}_{p(c_2\vert \mathcal{G}_2)}\log q(\mathcal{G}_2\vert c_2)
	\end{gather}
	where $q(\mathcal{G}_1\vert c_1)$ and $q(\mathcal{G}_2\vert c_2)$ are the variational approximations of the posterior $p(\mathcal{G}_1\vert c_1)$ and $p(\mathcal{G}_2\vert c_2)$, respectively.
\end{proposition}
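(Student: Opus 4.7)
The plan is to mirror the derivation used for Proposition \ref{pro1}, since Propositions \ref{pro1} and \ref{pro3} have essentially the same structure: both are variational lower bounds on a mutual information term between an input graph view and its encoded latent. The semantic role of the latent (view-specific $h$ versus common $c$) plays no part in the probabilistic argument, so no new machinery is needed beyond what was used for Proposition \ref{pro1}.

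First I would introduce a variational approximation $q(\mathcal{G}_1\vert c_1)$ to the true (and intractable) posterior $p(\mathcal{G}_1\vert c_1)$, and invoke the non-negativity of the Kullback--Leibler divergence $\mathcal{D}_{KL}(p(\mathcal{G}_1\vert c_1)\Vert q(\mathcal{G}_1\vert c_1))\geq 0$. Rearranging this inequality directly yields $\mathbb{E}_{p(\mathcal{G}_1\vert c_1)}\log p(\mathcal{G}_1\vert c_1) \geq \mathbb{E}_{p(\mathcal{G}_1\vert c_1)}\log q(\mathcal{G}_1\vert c_1)$, which is the only analytic ingredient I need.

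Next I would expand $I(\mathcal{G}_1;c_1)$ by its definition as $\mathbb{E}_{p(\mathcal{G}_1,c_1)}\log\frac{p(\mathcal{G}_1\vert c_1)}{p(\mathcal{G}_1)}$, substitute the KL-derived lower bound into the numerator, and observe that $-\mathbb{E}_{p(\mathcal{G}_1)}\log p(\mathcal{G}_1)=H(\mathcal{G}_1)$ is independent of the encoder parameters and therefore can be dropped from the optimization objective (as was done in the proof of Proposition \ref{pro1}). Factorizing the joint as $p(\mathcal{G}_1,c_1)=p(\mathcal{G}_1)p(c_1\vert\mathcal{G}_1)$ then rewrites the bound in the desired form $I(\mathcal{G}_1;c_1)\geq \mathbb{E}_{p(\mathcal{G}_1)}\mathbb{E}_{p(c_1\vert\mathcal{G}_1)}\log q(\mathcal{G}_1\vert c_1)$. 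The companion inequality for $I(\mathcal{G}_2;c_2)$ then follows by a verbatim repetition of the argument with indices $1$ and $2$ exchanged.

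I do not expect any real obstacle: the derivation is routine once the variational surrogate is introduced, and the bound is standard in the evidence-lower-bound literature. The only point I would be careful to state explicitly is that discarding $H(\mathcal{G}_1)$ (and $H(\mathcal{G}_2)$) is legitimate because the entropies of the data distributions do not depend on the learnable parameters, so the displayed bound is what one should actually maximize during training.
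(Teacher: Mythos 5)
Your proposal is correct and matches the paper's approach exactly: the paper itself derives Proposition \ref{pro3} by remarking that the argument from Proposition \ref{pro1} (KL non-negativity for the variational decoder, expansion of mutual information, discarding the parameter-independent entropy $H(\mathcal{G}_1)$, and factorizing $p(\mathcal{G}_1,c_1)=p(\mathcal{G}_1)p(c_1\vert\mathcal{G}_1)$) carries over verbatim with $h$ replaced by $c$. No gaps; the caveat you flag about dropping the entropy term is the same one the paper makes.
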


Following the proof of Proposition \ref{pro1}, we can easily obtain the variational lower bounds of $I(\mathcal{G}_1;c_1)$ and $I(\mathcal{G}_2;c_2)$ in a similar way, thus derive Proposition \ref{pro3}.

\begin{proposition}[Lower bounds of $I(\mathcal{G}_2;c_1)$ and $I(\mathcal{G}_1;c_2)$] \label{pro4}
	For mutual information $I(\mathcal{G}_2;c_1)$ and $I(\mathcal{G}_1;c_2)$, we have
	\begin{gather}
		I(\mathcal{G}_2;c_1) \geq \mathbb{E}_{p(\mathcal{G}_1,\mathcal{G}_2)}\mathbb{E}_{p(c_1\vert \mathcal{G}_1)}\log q(\mathcal{G}_2\vert c_1)\\
		I(\mathcal{G}_1;c_2) \geq \mathbb{E}_{p(\mathcal{G}_1,\mathcal{G}_2)}\mathbb{E}_{p(c_2\vert \mathcal{G}_2)}\log q(\mathcal{G}_1\vert c_2)
	\end{gather}
	where $q(\mathcal{G}_2\vert c_1)$ and $q(\mathcal{G}_1\vert c_2)$ are the variational approximations of the posterior $p(\mathcal{G}_2\vert c_1)$ and $p(\mathcal{G}_1\vert c_2)$, respectively.
\end{proposition}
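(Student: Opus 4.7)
The plan is to mirror the derivation of Proposition \ref{pro1} for the standard variational lower bound, and then unfold the joint distribution $p(\mathcal{G}_2, c_1)$ through the encoding channel that defines $c_1$. Concretely, I would first invoke the non-negativity of the KL divergence $\mathcal{D}_{KL}(p(\mathcal{G}_2\vert c_1)\Vert q(\mathcal{G}_2\vert c_1)) \geq 0$ to obtain
\begin{equation*}
\mathbb{E}_{p(\mathcal{G}_2\vert c_1)}\log p(\mathcal{G}_2\vert c_1) \geq \mathbb{E}_{p(\mathcal{G}_2\vert c_1)}\log q(\mathcal{G}_2\vert c_1),
\end{equation*}
and then combine this with the definition $I(\mathcal{G}_2;c_1) = \mathbb{E}_{p(\mathcal{G}_2,c_1)}\log\frac{p(\mathcal{G}_2\vert c_1)}{p(\mathcal{G}_2)}$ to deduce
\begin{equation*}
I(\mathcal{G}_2;c_1) \geq \mathbb{E}_{p(\mathcal{G}_2,c_1)}\log q(\mathcal{G}_2\vert c_1) + H(\mathcal{G}_2),
\end{equation*}
exactly as in Proposition \ref{pro1}. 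Since $H(\mathcal{G}_2)$ is a constant with respect to the model parameters, it can be dropped for optimization purposes, yielding the preliminary bound $I(\mathcal{G}_2;c_1) \geq \mathbb{E}_{p(\mathcal{G}_2,c_1)}\log q(\mathcal{G}_2\vert c_1)$.

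The second, less routine step is to rewrite this expectation in the sampling form stated in the proposition. The representation $c_1$ is produced by the encoder $f_{enc}:\mathcal{G}_1 \to c_1$, so the generative process satisfies the conditional independence $c_1 \perp \mathcal{G}_2 \mid \mathcal{G}_1$, i.e.\ $p(c_1\vert \mathcal{G}_1, \mathcal{G}_2) = p(c_1\vert \mathcal{G}_1)$. Consequently, I would marginalize $\mathcal{G}_1$ back in explicitly:
\begin{equation*}
p(\mathcal{G}_2, c_1) = \int p(\mathcal{G}_1,\mathcal{G}_2)\, p(c_1\vert \mathcal{G}_1)\, d\mathcal{G}_1,
\end{equation*}
so that
\begin{equation*}
\mathbb{E}_{p(\mathcal{G}_2,c_1)}\log q(\mathcal{G}_2\vert c_1) = \mathbb{E}_{p(\mathcal{G}_1,\mathcal{G}_2)}\mathbb{E}_{p(c_1\vert \mathcal{G}_1)}\log q(\mathcal{G}_2\vert c_1),
\end{equation*}
which is precisely the lower bound in the statement. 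The derivation for $I(\mathcal{G}_1;c_2)$ is symmetric, exchanging the roles of $\mathcal{G}_1$ and $\mathcal{G}_2$ and using $p(c_2\vert \mathcal{G}_1,\mathcal{G}_2) = p(c_2\vert \mathcal{G}_2)$.

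The main conceptual obstacle is justifying the rewriting of the expectation: although the mutual information $I(\mathcal{G}_2;c_1)$ looks asymmetric in the sense that $c_1$ is encoded from $\mathcal{G}_1$ rather than $\mathcal{G}_2$, the Markov structure of the encoder is what allows us to sample $c_1$ via the accessible conditional $p(c_1\vert \mathcal{G}_1)$ together with the joint view distribution $p(\mathcal{G}_1,\mathcal{G}_2)$. I would emphasize this step explicitly, since it is the key place where the proof departs from the routine template of Proposition \ref{pro1} and where the assumption of a shared underlying graph $\mathcal{G}$ (linking $\mathcal{G}_1$ and $\mathcal{G}_2$ through $p(\mathcal{G}_1,\mathcal{G}_2)$) is used. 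The rest of the argument is mechanical application of the KL non-negativity trick and the tower property of expectation.
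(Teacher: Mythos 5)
Your proof is correct and follows essentially the same route as the paper's: KL non-negativity to replace $p(\cdot\vert c)$ with $q(\cdot\vert c)$, dropping the constant entropy term, and then re-expressing $p(\mathcal{G}_2,c_1)$ by marginalizing over $\mathcal{G}_1$ and using the Markov structure $p(c_1\vert\mathcal{G}_1,\mathcal{G}_2)=p(c_1\vert\mathcal{G}_1)$ of the encoder. You make that conditional-independence step explicit, which the paper leaves implicit, but the argument is the same.
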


\begin{proof}
	Let $q(\mathcal{G}_1\vert c_2)$ be a variational approximation to $p(\mathcal{G}_1\vert c_2)$. Using the fact that Kullback-Leiber divergence is always non-negative, we have
	\begin{gather}
		\mathcal{D}_{KL}(p(\mathcal{G}_1\vert c_2)\Vert q(\mathcal{G}_1\vert c_2)) \geq 0 \\
		\mathbb{E}_{p(\mathcal{G}_1\vert c_2)} \log \frac{p(\mathcal{G}_1\vert c_2)}{q(\mathcal{G}_1\vert c_2)} \geq 0 \\
		\mathbb{E}_{p(\mathcal{G}_1\vert c_2)} \log p(\mathcal{G}_1\vert c_2) \geq \mathbb{E}_{p(\mathcal{G}_1\vert c_2)} \log q(\mathcal{G}_1\vert c_2).
	\end{gather}
	Then, according to the definition of mutual information, we can derive 
	\begin{equation}
		\begin{aligned}
			I(\mathcal{G}_1;c_2) &\geq \mathbb{E}_{p(\mathcal{G}_1,c_2)}\log \frac{q(\mathcal{G}_1\vert c_2)}{p(\mathcal{G}_1)} \\
			&= \mathbb{E}_{p(\mathcal{G}_1,c_2)}\log q(\mathcal{G}_1\vert c_2) - \mathbb{E}_{p(\mathcal{G}_1)}\log p(\mathcal{G}_1)\\
			&= \mathbb{E}_{p(\mathcal{G}_1,c_2)}\log q(\mathcal{G}_1\vert c_2) + H(\mathcal{G}_1).
		\end{aligned}
	\end{equation}
	Note that the entropy $H(\mathcal{G}_1)$ is independent of the objective optimization and can be ignored, hence we obtain
	\begin{gather} \label{eq27a}
		I(\mathcal{G}_1,c_2) \geq \mathbb{E}_{p(\mathcal{G}_1,c_2)}\log q(\mathcal{G}_1\vert c_2). 
	\end{gather} 
	We now consider the joint distribution $p(\mathcal{G}_1,\mathcal{G}_2)$ and rewrite this distribution in form of
	\begin{equation}
		\begin{aligned}
			p(\mathcal{G}_1,c_2) &= \int p(\mathcal{G}_1, \mathcal{G}_2, c_2)d\mathcal{G}_2 \\
			&= \int p(\mathcal{G}_1,\mathcal{G}_2)p(c_2\vert (\mathcal{G}_1,\mathcal{G}_2))d\mathcal{G}_2\\
			&= \int p(\mathcal{G}_1,\mathcal{G}_2)p(c_2\vert \mathcal{G}_2)d\mathcal{G}_2.
		\end{aligned} \label{eq28}
	\end{equation}
	Therefore, by replacing the joint distribution $p(\mathcal{G}_1,c_2)$ in Equation \ref{eq27a}, we have
	\begin{gather}
		I(\mathcal{G}_1;c_2) \geq \mathbb{E}_{p(\mathcal{G}_1,\mathcal{G}_2)}\mathbb{E}_{p(c_2\vert \mathcal{G}_2)}\log q(\mathcal{G}_1\vert c_2).
	\end{gather}
	Similarly, we can derive the lower bound of $I(\mathcal{G}_2;c_1)$ as 
	\begin{gather} 
		I(\mathcal{G}_2;c_1) \geq \mathbb{E}_{p(\mathcal{G}_1,\mathcal{G}_2)}\mathbb{E}_{p(c_1\vert \mathcal{G}_1)}\log q(\mathcal{G}_2\vert c_1).
	\end{gather}
\end{proof} 

By combining the above lower bounds, we can write the objective of achieving consistency as 
\begin{equation}
	\begin{aligned}
		\mathcal{L}_{c} = \,\, &\mathbb{E}_{p(\mathcal{G}_1)}\mathbb{E}_{p(c_1\vert \mathcal{G}_1)}\log q(\mathcal{G}_1\vert c_1) \\
		&+ \mathbb{E}_{p(\mathcal{G}_2)}\mathbb{E}_{p(c_2\vert \mathcal{G}_2)}\log q(\mathcal{G}_2\vert c_2) \\
		&+ \zeta \cdot \mathbb{E}_{p(\mathcal{G}_1,\mathcal{G}_2)}\mathbb{E}_{p(c_1\vert \mathcal{G}_1)}\log q(\mathcal{G}_2\vert c_1) \\
		&+ \xi \cdot \mathbb{E}_{p(\mathcal{G}_1,\mathcal{G}_2)}\mathbb{E}_{p(c_2\vert \mathcal{G}_2)}\log q(\mathcal{G}_1\vert c_2)
	\end{aligned}
	\label{eq_c}
\end{equation}
where the first/second and the third/fourth terms encourage that the common representation $c_1$ and $c_2$ can simultaneously reconstruct the input graph view $\mathcal{G}_1$ and $\mathcal{G}_2$. We can also utilize Monte Carlo sampling to approximately estimate and instantiate the consistency objective, and can obtain 
\begin{gather}
	\mathbb{E}_{p(\mathcal{G}_1)}\mathbb{E}_{p(c_1\vert \mathcal{G}_1)}\log q(\mathcal{G}_1\vert c_1) \approx \frac{1}{N}\sum_{i=1}^{N}\mathbb{E}_{p(c_1 \vert \mathcal{G}_1^{i})}\log q(\mathcal{G}_1^{i} \vert c_1)
\end{gather}
\begin{gather}
	\mathbb{E}_{p(\mathcal{G}_2)}\mathbb{E}_{p(c_2\vert \mathcal{G}_2)}\log q(\mathcal{G}_2\vert c_2) \approx  \frac{1}{N}\sum_{i=1}^{N}\mathbb{E}_{p(c_2 \vert \mathcal{G}_2^{i})}\log q(\mathcal{G}_2^{i} \vert c_2)
\end{gather}
\begin{equation}
	\begin{aligned}
		\zeta \cdot \mathbb{E}&_{p(\mathcal{G}_1,\mathcal{G}_2)}\mathbb{E}_{p(c_1\vert \mathcal{G}_1)}\log q(\mathcal{G}_2\vert c_1) \\
		&\approx \,  
		\zeta \cdot \left[ \frac{1}{N}\sum_{i=1}^{N} \mathbb{E}_{p(c_1 \vert \mathcal{G}_1^{i})}\log q(\mathcal{G}_2^{i} \vert c_1) \right]
	\end{aligned}
\end{equation}
\begin{equation}
	\begin{aligned}
		\xi \cdot
		\mathbb{E}&_{p(\mathcal{G}_1,\mathcal{G}_2)}\mathbb{E}_{p(c_2\vert \mathcal{G}_2)}\log q(\mathcal{G}_1\vert c_2) \\ 
		&\approx \, 
		\xi \cdot \left[ \frac{1}{N}\sum_{i=1}^{N} \mathbb{E}_{p(c_2\vert \mathcal{G}_2^{i})}\log q(\mathcal{G}_1^{i}\vert c_2) \right]
	\end{aligned}
\end{equation}
where $N$ denotes the number of total sampled data.

\subsection{Optimization and Inference}

After obtaining the objectives of achieving complementarity and consistency, we further combine these two information objectives to derive the overall objective in form of 
\begin{gather}
	\min \mathcal{L} = -(\mathcal{L}_{h} + \mathcal{L}_{c})
\end{gather}
where $\mathcal{L}_{h}$ and $ \mathcal{L}_{c}$ are complementarity objective (Equation \ref{eq_h}) and consistency objective (Equation \ref{eq_c}), respectively. To optimize this objective, we employ the classical graph neural network to encode the input graph data into the latent representations $\{h_1, h_2, c_1, c_2\}$. To define the variational approximations $\{q(\mathcal{G}_1\vert h_1), q(\mathcal{G}_2\vert h_2), q(\mathcal{G}_1\vert c_1), q(\mathcal{G}_2\vert c_2)\}$, we utilize the graph decoder \cite{vgae} to reconstruct the adjacency matrix $A_1$ and $A_2$, and the multilayer perception to reconstruct the node feature $X$, since input $\mathcal{G}$ contains topology and node feature information. Note that minimizing the upper bound of $I(\mathcal{G}_2;h_1)$ and $I(\mathcal{G}_1;h_2)$ need to get the variational conditional distributions $q(h_1\vert \mathcal{G}_2)$ and $q(h_2\vert \mathcal{G}_1)$, but these distributions are unknown. Here, we use an additional graph neural network to encode $\mathcal{G}_1$ and $\mathcal{G}_2$ to $h_2$ and $h_1$ for defining $q(h_1\vert \mathcal{G}_2)$ and $q(h_2\vert \mathcal{G}_1)$, respectively. 

The overall objective can be easily optimized by using stochastic gradient descent, and at inference time, the concatenation function is employed to aggregate latent representations $c_1$, $c_2$, $h_1$, $h_2$ for downstream tasks. 

\section{Experiments} \label{s5}
To verify the effectiveness, we evaluate the proposed approach on the graph classification and graph clustering tasks, and also perform the ablation study. 

\begin{table}[b]
	\centering
	\caption{Statistics of graph benchmark datasets.}
	\scalebox{0.85}{
		\begin{tabular}{|c|c|c|c|c|}
			\toprule
			\textsc{Dataset}&Graphs&Avg. Nodes&Avg. Edges& Classes\\
			\midrule
			\midrule
			MUTAG&188&17.93&19.79&2\\
			PROTEINS&1,113&39.06&72.82&2\\
			PTC-MR&344&14.29&14.69&2\\
			IMDB-BINARY&1,000&19.77&96.53&2\\
			IMDB-MULTI&1,000&13.00&65.94&3\\
			REDDIT-BINARY&2,000&429.63&497.75&2\\
			REDDIT-M5K&4,999&508.52&594.87&5\\
			\bottomrule
	\end{tabular}}
	\label{apptab1}
\end{table}


\begin{table*}[t]
	\centering
	\caption{Graph classification experimental results on benchmark datasets. The best performance is highlighted by the bold number and the underlined numbers denote the second-best performance. \textsc{Sup.} and \textsc{Unsup.} denote the Supervised and Unsupervised, respectively, and OOM denotes the Out of Memory. }
	\scalebox{0.86}{
		\begin{tabular}{|c|c|c|c|c|c|c|c|c|}
			\toprule
			&\textsc{Methods}&MUTAG&PROTEINS&PTC-MR&IMDB-BINARY&IMDB-MULTI&REDDIT-BINARY&REDDIT-M5K\\
			\midrule
			\midrule
			\multirow{4}*{\rotatebox{90}{\textsc{Kernel}}}&SP&83.54 $\pm$ 7.19&64.96 $\pm$ 3.43&59.96 $\pm$ 7.13&71.30 $\pm$ 4.36&47.53 $\pm$ 2.07&64.11 $\pm$ 0.14 &39.55 $\pm$ 0.22\\ 
			&RW&84.62 $\pm$ 6.81&$>$ 1 day&60.18 $\pm$ 8.52&50.68 $\pm$ 0.26&34.65 $\pm$ 0.19&$>$ 1 day&$>$ 1 day\\
			&WL&85.12 $\pm$ 6.97&73.32 $\pm$ 3.08&59.94 $\pm$ 7.53&72.30 $\pm$ 3.44&46.95 $\pm$ 0.46&73.05 $\pm$ 3.86&40.33 $\pm$ 1.92\\
			&Graphlet&85.61 $\pm$ 4.87&64.52 $\pm$ 4.25&61.04 $\pm$ 9.39&65.87 $\pm$ 0.98&43.89 $\pm$ 0.38&77.34 $\pm$ 0.18&41.01 $\pm$ 0.17\\
			\midrule
			\multirow{3}*{\rotatebox{90}{\textsc{Sup.}}}&GIN&88.14 $\pm$ 6.38&74.13 $\pm$ 4.21&61.61 $\pm$ 4.62&73.67 $\pm$ 3.67&50.87 $\pm$ 2.80&85.40 $\pm$ 3.12&52.50 $\pm$ 2.10\\
			&KGIN&85.15 $\pm$ 6.92&75.65 $\pm$ 3.59&61.28 $\pm$ 4.58&68.80 $\pm$ 6.32&50.60 $\pm$ 1.97&76.95 $\pm$ 4.13&49.61 $\pm$ 7.09\\
			&SIB&\underline{89.36 $\pm$ 7.45}&74.97 $\pm$ 5.71
			&62.12 $\pm$ 6.14&73.30 $\pm$ 3.90&51.37 $\pm$ 4.37&85.20 $\pm$ 3.93&OOM\\
			\midrule
			\multirow{5}*{\rotatebox{90}{\textsc{Unsup.}}}&GAE&87.81 $\pm$ 8.47&72.05 $\pm$ 3.56&60.73 $\pm$ 5.73&71.90 $\pm$ 4.70&47.60 $\pm$ 2.83&80.15 $\pm$ 1.87&47.87 $\pm$ 1.96\\
			&VGAE&87.75 $\pm$ 5.34&73.86 $\pm$ 2.31&61.39 $\pm$ 6.31&70.00 $\pm$ 3.92&46.40 $\pm$ 5.01&\underline{85.75 $\pm$ 2.69}&49.25 $\pm$ 2.10\\
			&DGI&87.72 $\pm$ 9.24&73.05 $\pm$ 2.79&61.36 $\pm$ 4.82&69.50 $\pm$ 2.46&48.40 $\pm$ 2.53&85.60 $\pm$ 1.79&49.95 $\pm$ 1.49\\
			&InfoGraph&88.33 $\pm$ 6.54&73.23 $\pm$ 6.59&61.04 $\pm$ 7.22&72.80 $\pm$ 4.09&48.07 $\pm$ 4.03&85.35 $\pm$ 2.67&48.89 $\pm$ 2.23\\
			&MVGRL&85.70 $\pm$ 7.77&\underline{75.65 $\pm$ 2.84}&\underline{62.90 $\pm$ 7.41}&\underline{74.20 $\pm$ 5.49}&\underline{50.80 $\pm$ 3.53}&83.35 $\pm$ 2.66&\underline{52.63 $\pm$ 1.82}\\
			\midrule
			&MVGIB&\textbf{91.49 $\pm$ 4.83}&\textbf{76.46 $\pm$ 3.08}&\textbf{65.57 $\pm$ 6.90}&\textbf{76.70 $\pm$ 5.06}&\textbf{51.93 $\pm$ 2.67}&\textbf{90.65 $\pm$ 1.47}&\textbf{54.01 $\pm$ 1.99}\\
			\bottomrule
	\end{tabular}}
	\label{tab11}
\end{table*}

\begin{table*}[t]
	\centering
	\caption{Impact of feature reconstruction, complementarity or consistency, and different views.}
	\scalebox{0.83}{
		\begin{tabular}{|c|c|c|c|c|c|c|c|}
			\toprule
			\textsc{Ablation Study}&MUTAG&PROTEINS&PTC-MR&IMDB-BINARY&IMDB-MULTI&REDDIT-BINARY&REDDIT-M5K\\
			\midrule
			\midrule
			w/o Fea. Rescon.&90.54 $\pm$ 5.99&76.28 $\pm$ 1.70&63.12 $\pm$ 7.69&75.30 $\pm$ 3.87&50.27 $\pm$ 2.69&90.25 $\pm$ 1.15&52.67 $\pm$ 2.01\\
			w/ Fea. Rescon.&\textbf{91.49 $\pm$ 4.83}&\textbf{76.46 $\pm$ 3.08}&\textbf{65.67 $\pm$ 6.90}&\textbf{76.70 $\pm$ 5.06}&\textbf{51.93 $\pm$ 2.67}&\textbf{90.65 $\pm$ 1.47}&\textbf{54.01 $\pm$ 1.99}\\
			\midrule
			w/o Complementarity&89.94 $\pm$ 4.85&75.83 $\pm$ 2.93&63.97 $\pm$ 5.40&74.60 $\pm$ 3.80&50.07 $\pm$ 4.04&89.40 $\pm$ 2.13&51.39 $\pm$ 1.76\\
			w/o Consistency&84.56 $\pm$ 5.06&74.04 $\pm$ 3.09&60.48 $\pm$ 7.33&73.90 $\pm$ 4.70&48.20 $\pm$ 4.30&80.65 $\pm$ 3.99&50.73 $\pm$ 1.84\\
			w/ Com. and Con.&\textbf{91.49 $\pm$ 4.83}&\textbf{76.46 $\pm$ 3.08}&\textbf{65.67 $\pm$ 6.90}&\textbf{76.70 $\pm$ 5.06}&\textbf{51.93 $\pm$ 2.67}&\textbf{90.65 $\pm$ 1.47}&\textbf{54.01 $\pm$ 1.99}\\
			\midrule
			\textsc{Adj-Knn}&\textbf{91.49 $\pm$ 4.83}&\textbf{76.46 $\pm$ 3.08}&62.48 $\pm$ 5.59&\textbf{76.70 $\pm$ 5.06}&50.20 $\pm$ 2.35&90.55 $\pm$ 1.59&\textbf{54.01 $\pm$ 1.99}\\
			\textsc{Adj-Dknn}&91.46 $\pm$ 5.42&75.11 $\pm$ 3.57&\textbf{65.67 $\pm$ 6.90}&75.00 $\pm$ 3.85&\textbf{51.93 $\pm$ 2.67}&\textbf{90.65 $\pm$ 1.47}&53.43 $\pm$ 2.57\\
			\textsc{Adj-Ppr}&89.36 $\pm$ 6.28&74.93 $\pm$ 4.75&63.41 $\pm$ 7.40&73.50 $\pm$ 3.23&50.93 $\pm$ 3.45&85.75 $\pm$ 2.75 &52.93 $\pm$ 1.39\\
			\textsc{Knn-Dknn}&88.36 $\pm$ 4.51&75.56 $\pm$ 2.94&62.81 $\pm$ 8.22&74.20 $\pm$ 4.58&51.53 $\pm$ 4.20&84.30 $\pm$ 2.39&51.59 $\pm$ 1.64\\
			\textsc{Knn-Ppr}&88.89 $\pm$ 7.19&73.12 $\pm$ 4.48&62.19 $\pm$ 5.46&72.70 $\pm$ 2.97&51.27 $\pm$ 3.05&85.53 $\pm$ 1.43&50.91 $\pm$ 1.29\\
			\textsc{Ppr-Dknn}&86.17 $\pm$ 5.38&72.05 $\pm$ 4.59&62.76 $\pm$ 9.04&72.80 $\pm$ 5.58&49.93 $\pm$ 4.21&87.00 $\pm$ 2.55&49.57 $\pm$ 1.62\\
			\bottomrule
	\end{tabular}}
	\label{tab22}
\end{table*}

\subsection{Datasets}
We evaluate our method on seven graph benchmark \mbox{datasets}, including MUTAG \cite{2012Subgraph}, PROTEINS \cite{2005Protein}, PTC-MR \cite{2012Subgraph}, IMDB-BINARY (IMDB-B), IMDB-MULTI (IMDB-M), REDDIT-BINARY (REDDIT-B), and REDDIT-MULTI-5K (REDDIT-M5K) \cite{dgk}. Here, MUTAG is a mutagenic aromatic and heteroaromatic nitro compounds dataset and their graph label indicate whether the mutagenic \mbox{effect} on bacteria exists; PROTEINS represents protein structures which are helix, sheet, and turn; PTC-MR is a chemical compounds dataset which represents the carcinogenicity for male and female rats; IMDB-BINARY and IMDB-MULTI are movie collaboration datasets. Each graph corresponds to an ego-network for each actor/actress, where nodes correspond to actors/actresses and an edge is drawn between two actors/actresses if they appear in the same movie; REDDIT-MULTI and REDDIT-M5K are balanced datasets where each graph corresponds to an online discussion thread and nodes correspond to users. For IMDB and REDDIT datasets, we use the one-hot encoding of node degrees as node features since the original data has no node features. The dataset statistics are shown in Table \ref{apptab1}.

\begin{table}[tb]
	\centering
	\caption{Graph clustering results on benchmark datasets. The best performance is highlighted by the bold number and the underlined numbers denote the second-best performance.}
	\scalebox{0.8}{
		\begin{tabular}{|c|c|c|c|c|c|}
			\toprule
			\textsc{Datasets}&\textsc{Metric}&DGI&InfoGraph&MVGRL&MVGIB\\
			\midrule
			\midrule
			\multirow{2}*{IMDB-B}&NMI&0.0801&\underline{0.0811}&0.0804&\textbf{0.0821}\\
			&ARI&0.0102&0.0112&0.0276&\textbf{0.0314}\\
			\midrule
			\multirow{2}*{IMDB-M}&NMI&0.0332&\textbf{0.0378}&0.0247&\underline{0.0367}\\
			&ARI&0.0013&0.0012&\underline{0.0109}&\textbf{0.0112}\\
			\midrule
			\multirow{2}*{REDDIT-B}&NMI&0.0142&0.0908&\underline{0.1392}&\textbf{0.1517}\\
			&ARI&0.0002&0.0258&\underline{0.1515}&\textbf{0.1525}\\
			\midrule
			\multirow{2}*{REDDIT-M5K}&NMI&0.0691&0.0142&\underline{0.1510}&\textbf{0.1709}\\
			&ARI&0.0527&0.0003&\underline{0.1150}&\textbf{0.1318}\\
			\bottomrule
	\end{tabular}}
	\label{tab3}
\end{table}

\subsection{Baselines}
We compare the proposed method with previous graph-level representation learning baselines, including graph kernel approaches and supervised or unsupervised graph neural network approaches. The graph kernel approaches include Shortest Path (SP) kernel \cite{spk}, Random Walk (RW) kernel \cite{rwk}, Weisfeiler-Lehman (WL) kernel \cite{wlk}, and Graphlet kernel \cite{graphlet}. The supervised graph neural network approaches include Graph Isomorphism Network (GIN) \cite{gin}, GIN with K-Nearest Neighbors (KGIN), and  Subgraph Information Bottleneck (SIB) \cite{sib}. The unsupervised graph neural network approaches include Graph Auto-Encoder (GAE), Variational GAE \cite{vgae}, Deep Graph Infomax (DGI) \cite{dgi}, InfoGraph \cite{infograph}, and Contrastive Multiview Graph Representation Learning (MVGRL) \cite{mvgrl}.

\begin{figure*}[t]
	\centering
	\subfigure[PROTEINS]{\includegraphics[width=0.57\columnwidth]{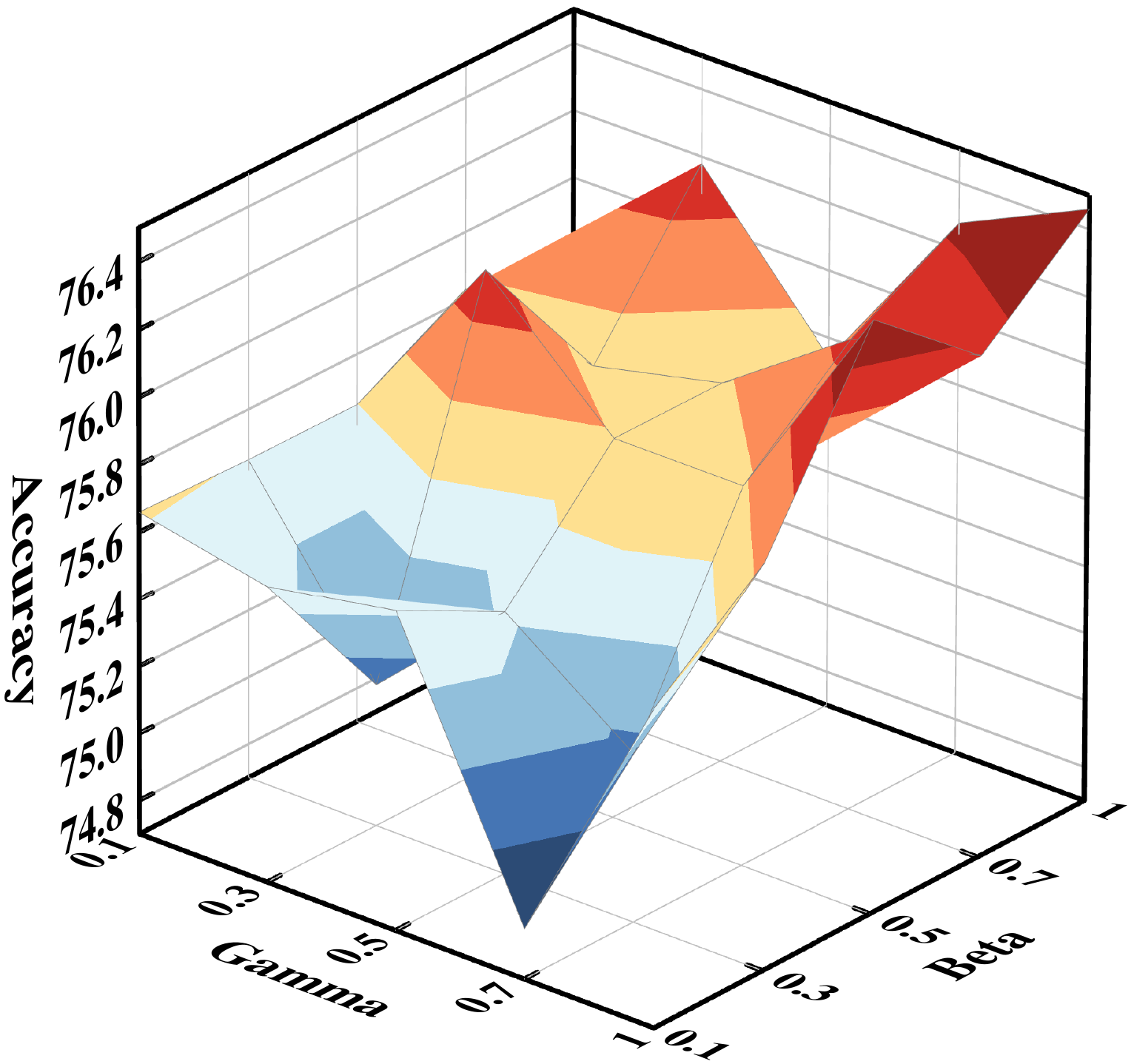}} \quad
	\subfigure[PTC-MR]{\includegraphics[width=0.565\columnwidth]{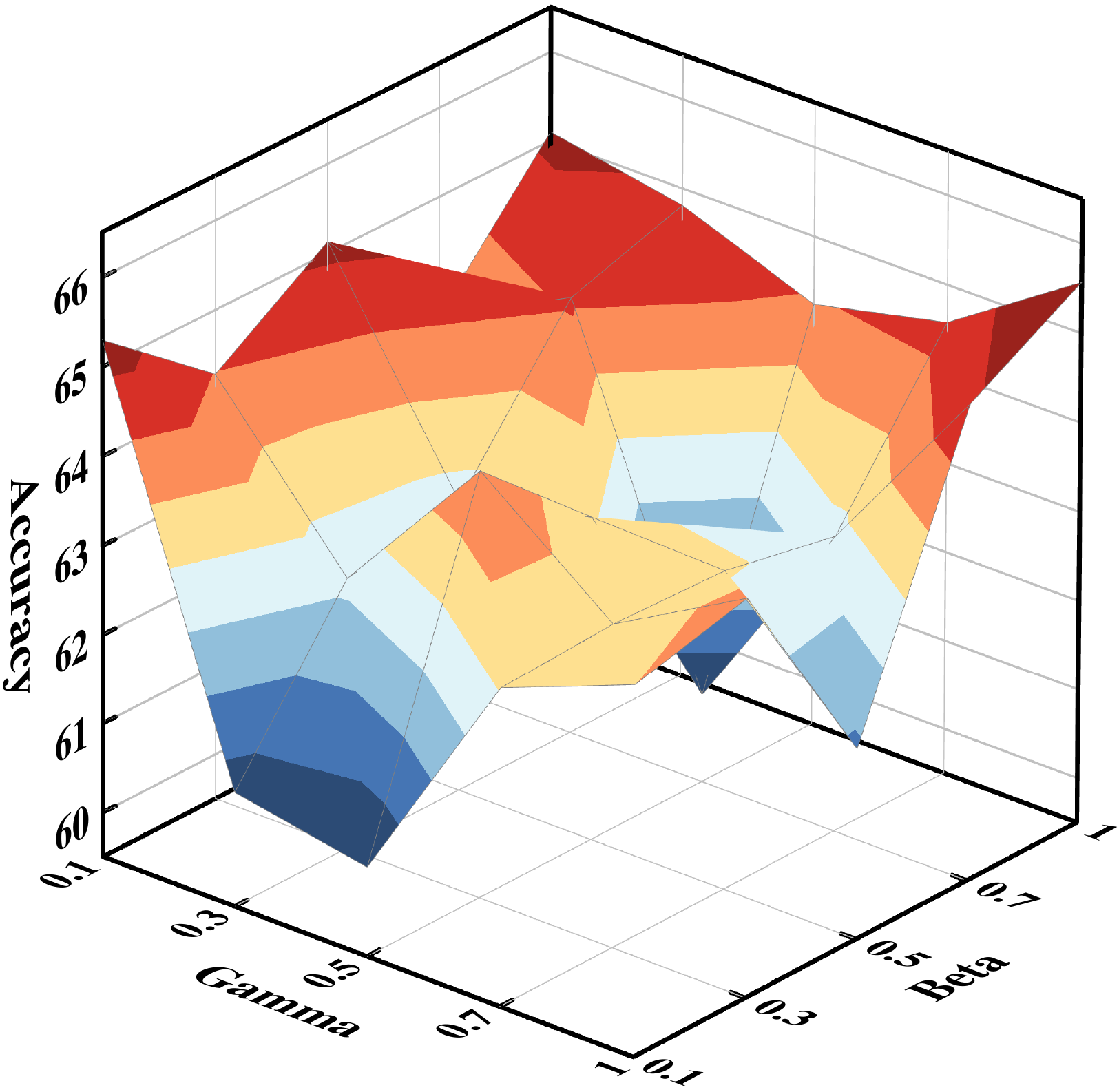}}  \quad
	\subfigure[IMDB-BINARY]{\includegraphics[width=0.57\columnwidth]{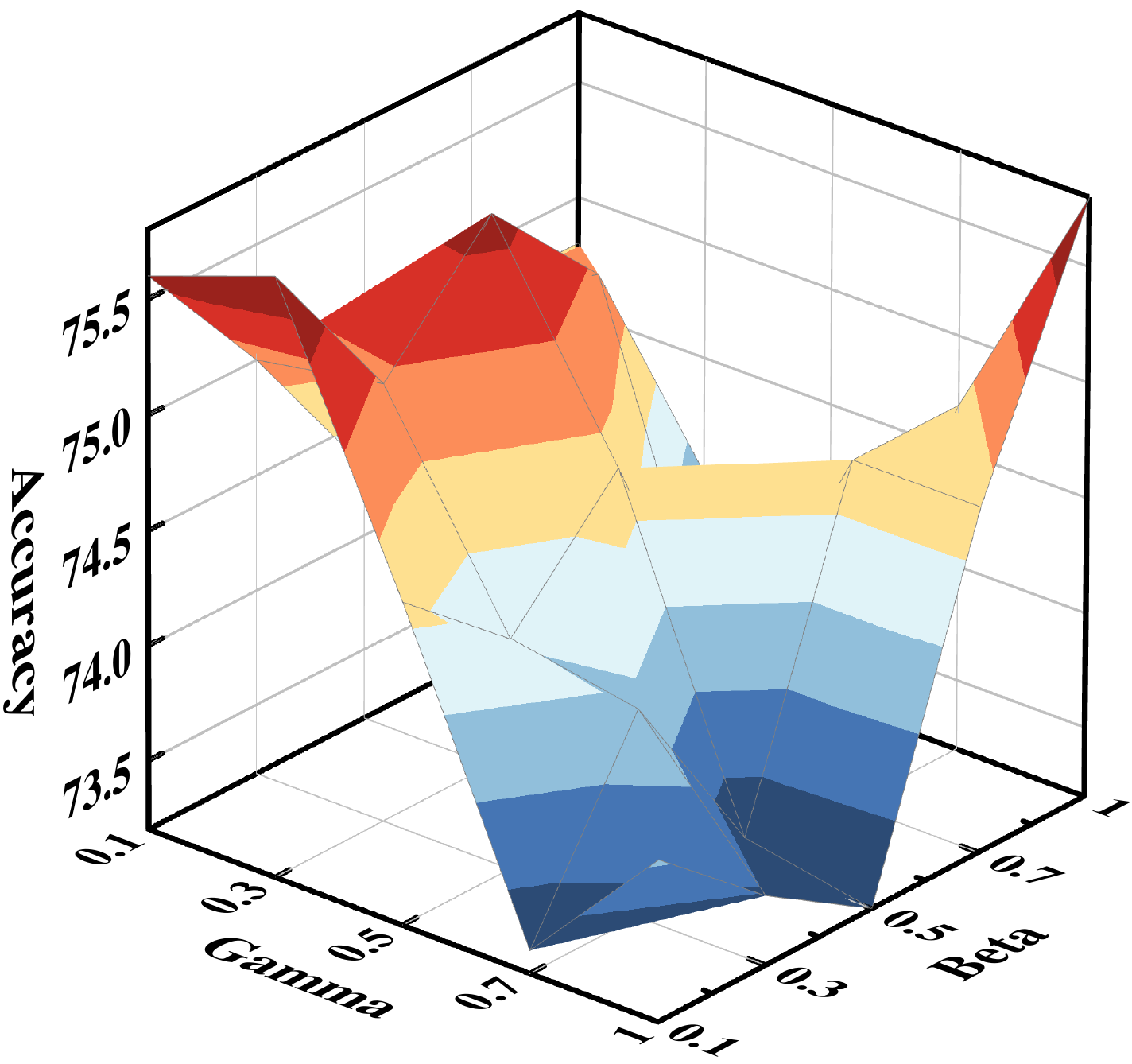}} \quad
	\subfigure[IMDB-MULTI]{\includegraphics[width=0.57\columnwidth]{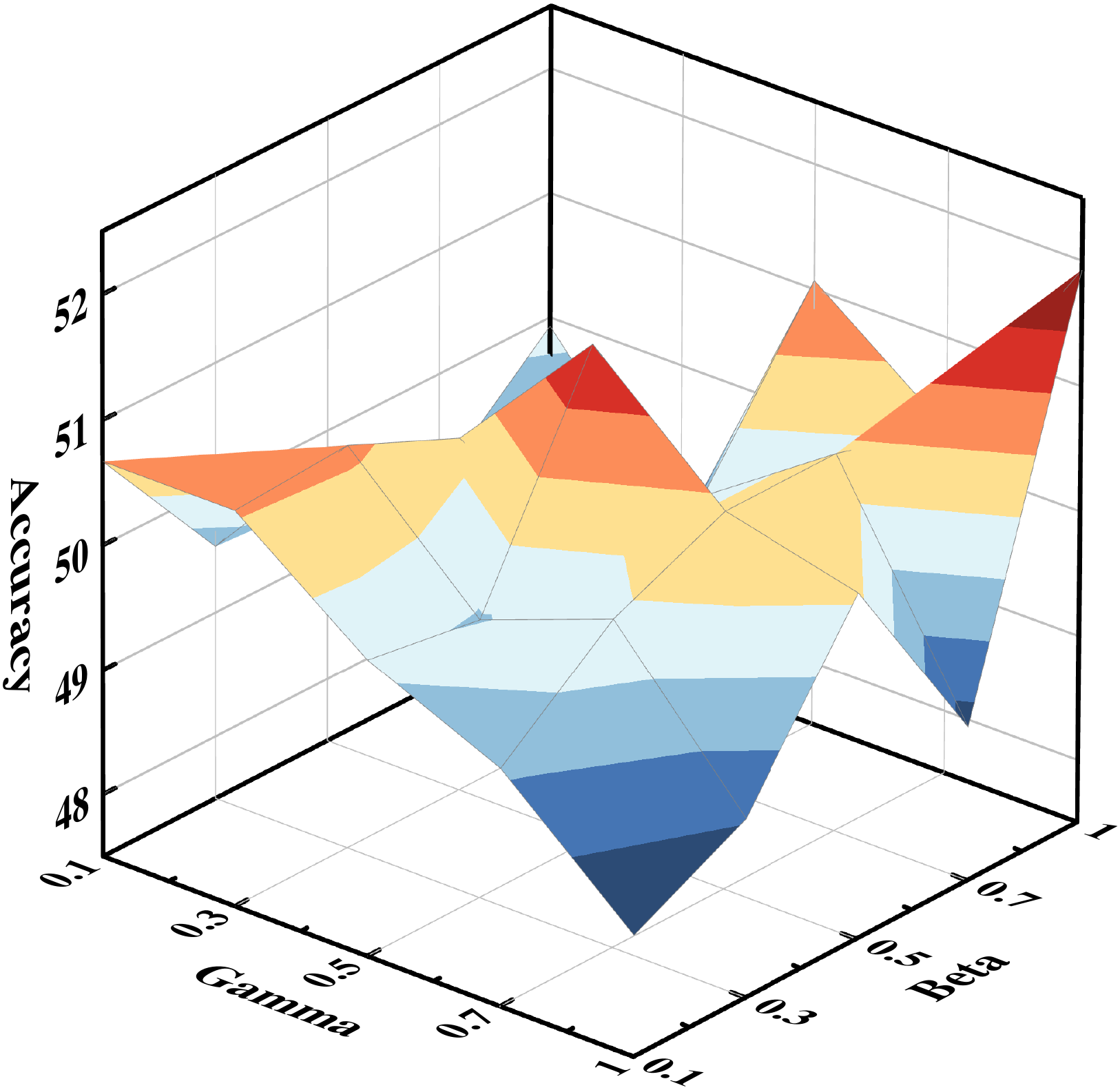}}  \quad
	\subfigure[REDDIT-BINARY]{\includegraphics[width=0.57\columnwidth]{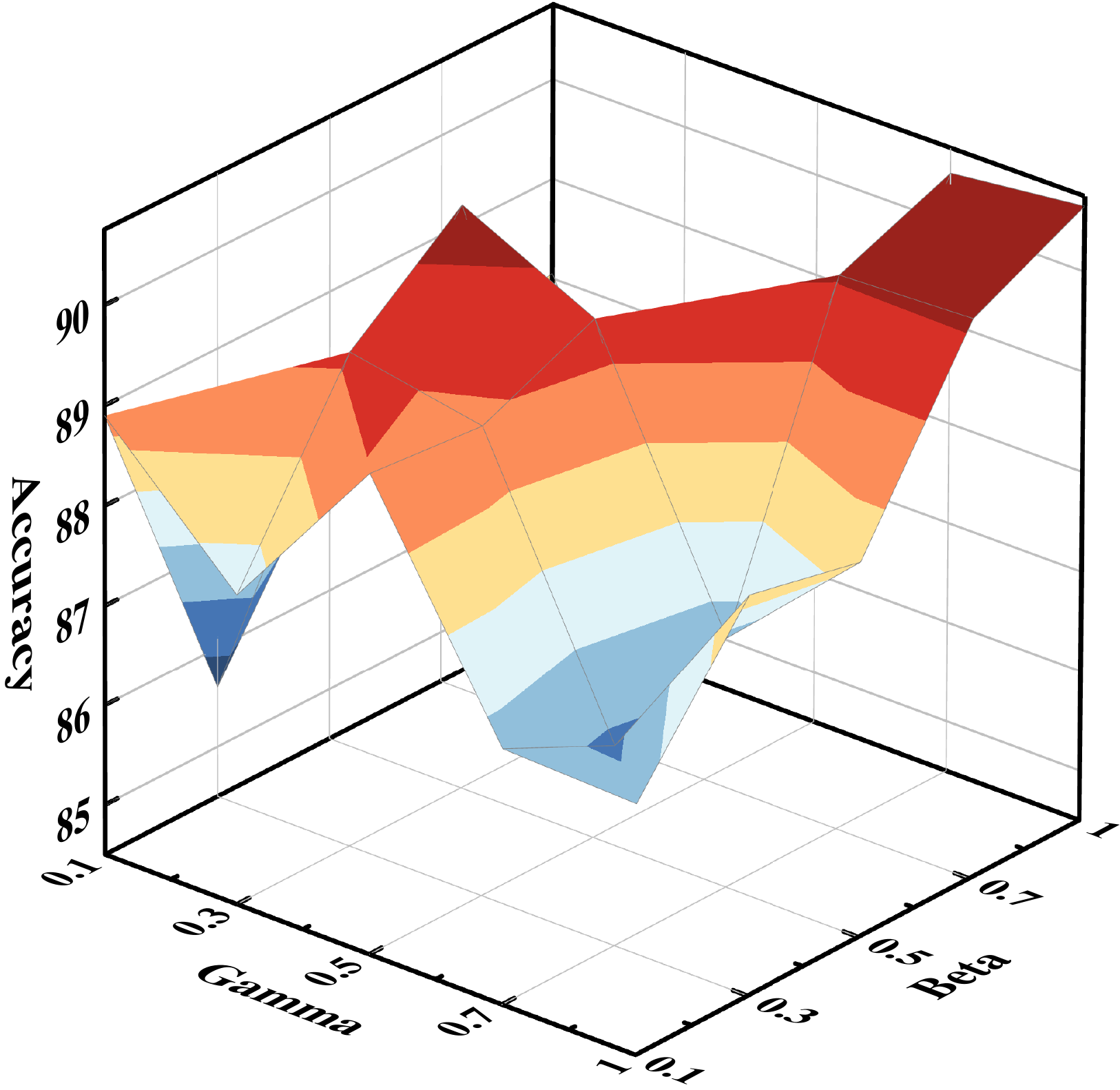}} \quad
	\subfigure[REDDIT-M5K]{\includegraphics[width=0.57\columnwidth]{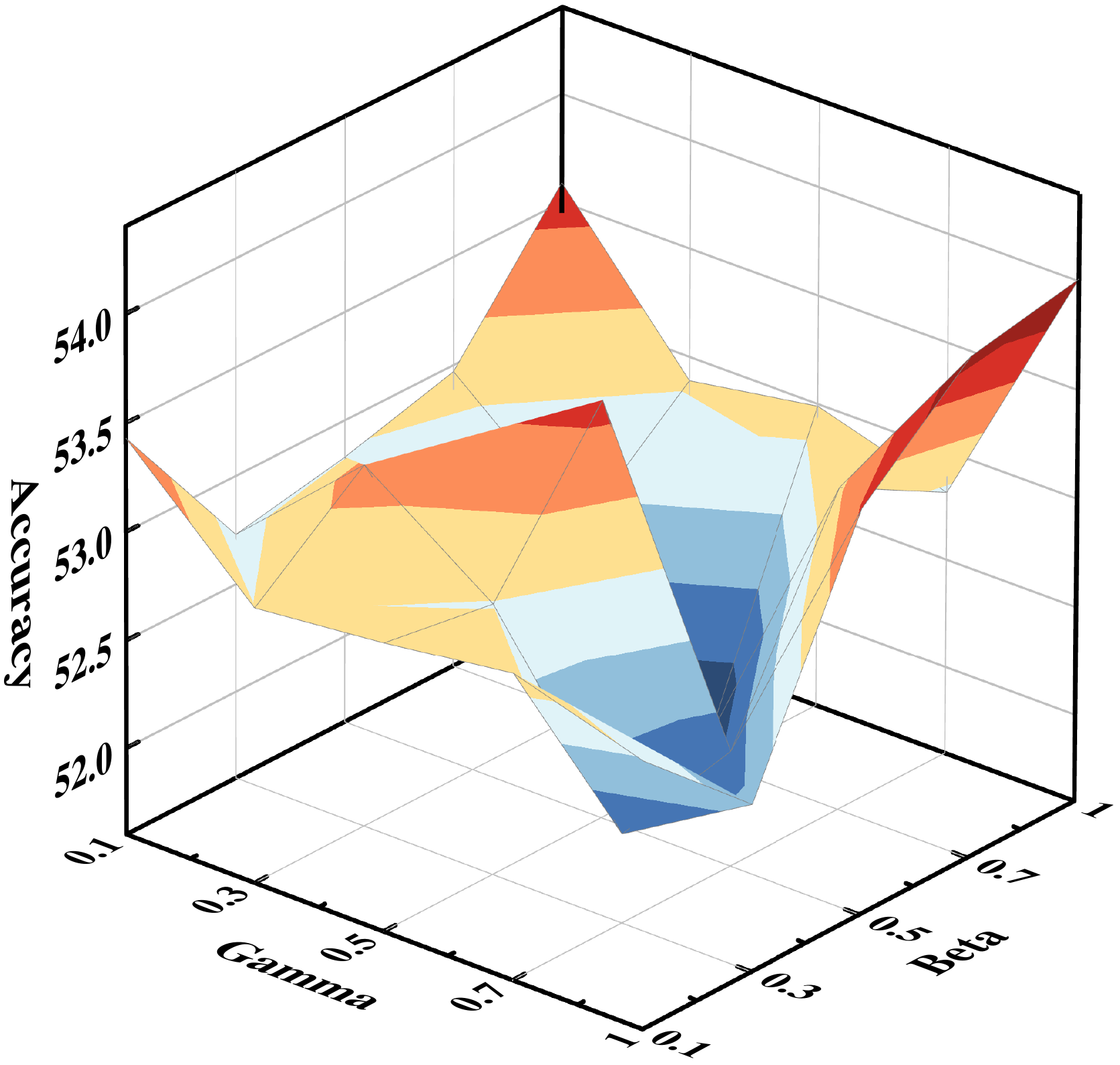}} \quad
	\caption{Impact of Trade-off Parameters on PROTEINS, PTC-MR, IMDB-BINARY, IMDB-MULTI, REDDIT-BINARY, and REDDIT-M5K datasets. The proposed MVGIB achieves better performance when $\beta$ and $\gamma$ are close or even equal, and vice versa. }
	\label{graph1}
\end{figure*}

\subsection{Configuration}
Thanks to the powerful representation ability of GIN \cite{gin}, we utilize GIN as the base encoder to define posteriors. Here, five GIN layers without jumping knowledge are applied in which Exponential Linear Unit (ELU) \cite{elu} activation and Batch Normalization \cite{batchnorm} is used on every hidden layer. We use Adam \cite{adam} optimizer with the initial learning rate $0.01$ and decay the learning rate by $0.5$ every $50$ epochs, and set the hidden dimension to be $64$ and batch size to be $128$. For kernel and unsupervised methods, we train the graph representation model for $100$ epochs and use $C$-SVM implementation of LIBSVM \cite{libsvm} to compute the average accuracy and standard deviation across the 10-folds within the cross-validation, where $C$ is set to be $5$. For the supervised method, we also train the classification network for $100$ epochs and report the 10-fold cross-validation accuracy with the standard deviation. 
To achieve a fair comparison, we use the same seeds to ensure the same validation settings for all models. All codes of neural network based methods in this paper are implemented with PyTorch Geometric\footnote{\url{https://github.com/pyg-team/pytorch_geometric}} and graph kernel methods are implemented with GraKeL\footnote{\url{https://github.com/ysig/GraKeL}}, and our codes are released on this site\footnote{\url{https://github.com/xiaolongo/MVGIB}}. 

\subsection{Graph Classification Results}

Table \ref{tab11} shows the graph classification results in comparison of the proposed method and baselines on the graph benchmark datasets. From this table, we have the following observations. First, compared with the graph kernel methods, the proposed MVGIB exhibits favorable classification accuracy on all benchmark datasets, achieving competitive performance. Second, compared with the supervised methods, the proposed MVGIB consistently obtains better performance. Note that the performance of GIN and KGIN focusing only on a single view is not satisfactory, yet our approach achieves a large performance improvement by using GIN as the base encoder, illustrating the effectiveness of integrating multiviews. Third, compared with unsupervised methods, the proposed MVGIB achieves better results on all datasets with a significant margin. Especially, MVGIB achieves maximum relative improvements of $4.9\%$ on REDDIT-BINARY. Although MVGRL also integrates multiview input graph data, we find that MVGIB achieves better results due to the fact that MVGRL does not distinguish consistency and complementary of multiview latent representations, indicating the necessity of the proposed strategy. In general, the proposed MVGIB outperforms the previous methods consistently and significantly, demonstrating the effectiveness and necessity of the proposed method.  

\subsection{Graph Clustering Results}
To evaluate the cluster performance, we utilize the proposed MVGIB to generate graph representations and then cluster these graph representations using the K-Means algorithm on IMDB and REDDIT datasets. Here, we set the number of clusters to be the ground-truth classes and report the Normalized Mutual Information (NMI) and Adjusted Rand Index (ARI) scores. Table \ref{tab3} shows the graph clustering experimental results. From this table, we have the following two observations. First, compared with the previous DGI and InfoGraph, the multiview baseline, MVGRL, achieves a competitive performance on NMI and ARI scores, indicating the effectiveness of incorporating multiview graph augmentations. Second, we can observe that the proposed MVGIB achieves good performance on NMI and ARI scores compared with the previous methods, further indicating the effectiveness of the proposed method under unsupervised representation learning.

\begin{figure*}[t]
	\centering
	\subfigure[Add edges on PTC-MR]{\includegraphics[width=0.5\columnwidth]{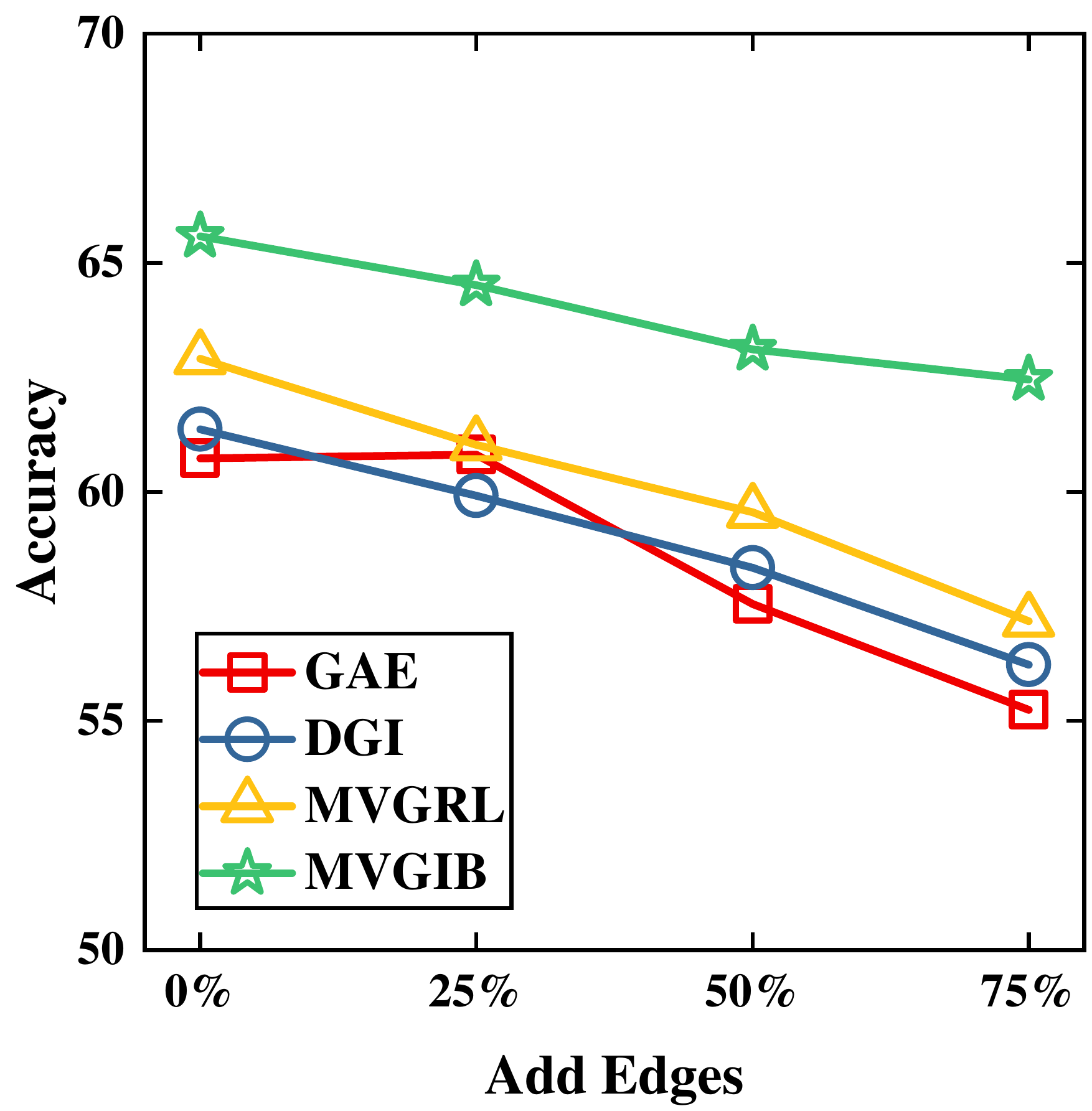}}
	\subfigure[Remove edges on PTC-MR]{\includegraphics[width=0.5\columnwidth]{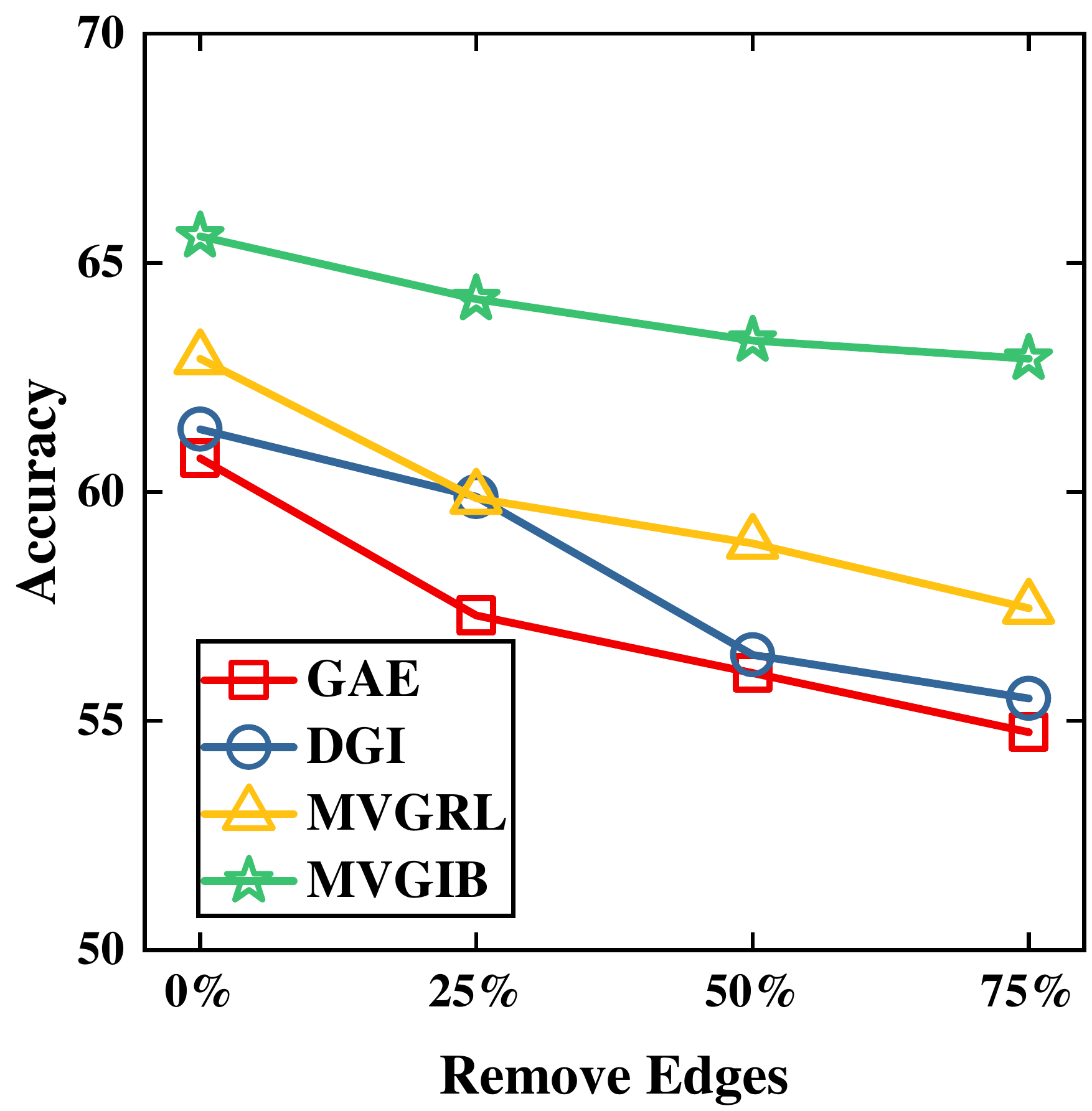}}
	\subfigure[Add edges on MUTAG]{\includegraphics[width=0.5\columnwidth]{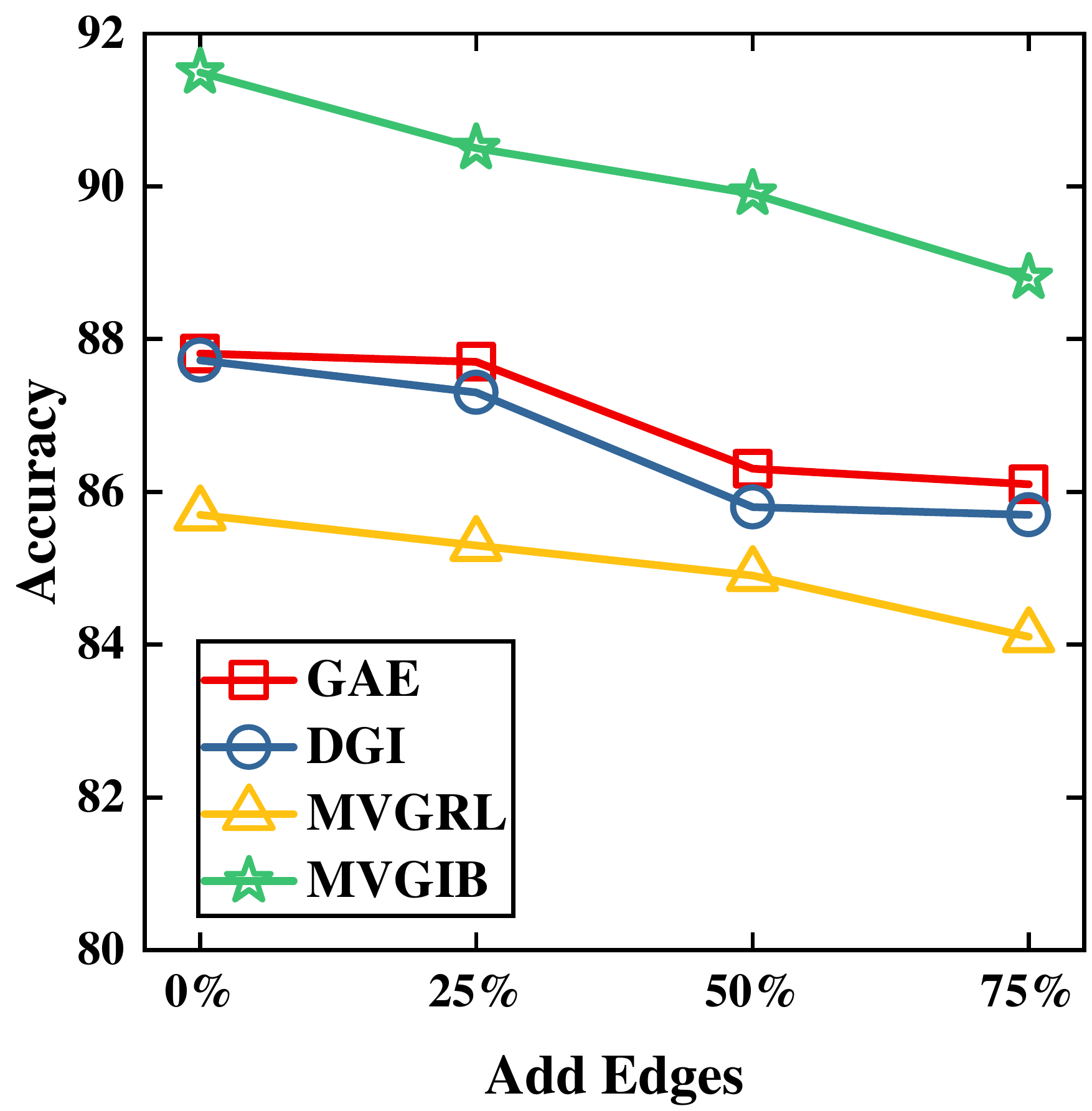}}
	\subfigure[Remove edges on MUTAG]{\includegraphics[width=0.5\columnwidth]{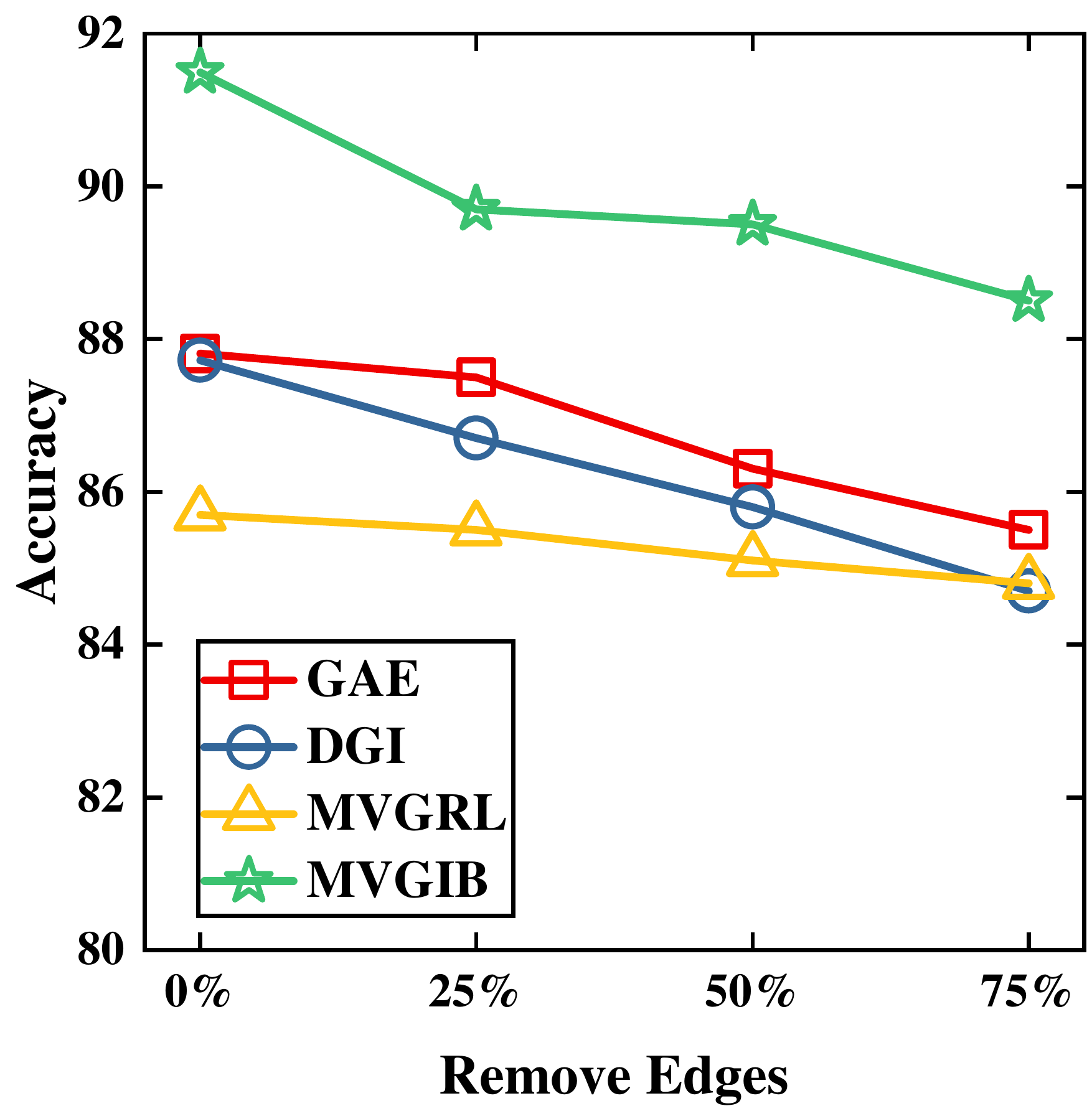}}
	\caption{Cross-validation accuracy under random edge addition and deletion on PTC-MR and MUTAG datasets. The proposed MVGIB outperforms baselines under different edge perturbation rates by a margin and the margin becomes larger as the perturbation rate increases.}
	\label{graph2}
\end{figure*}

\subsection{Ablation Study}
In this subsection, ablation study is conducted to investigate the performance of different components in the proposed method to understand the contribution for multiview graph information bottleneck.

\subsubsection{Impact of Feature Reconstruction}
Since the input graph contains both topology and node features, we not only employ the graph decoders \cite{vgae} to decode the hidden representations to reconstruct the topology but additionally use multilayer perception to reconstruct the node features. To reveal the impact of the feature reconstruction module, we conduct the comparative experiments for all datasets using two variants, i.e., MVGIB w/o feature reconstruction and MVGIB w/ feature reconstruction, and report the 10-fold cross-validation accuracy with the standard deviation. The experimental results are shown in Table \ref{tab22}. From this table, we can observe that with the feature reconstruction generally performs better than without the feature reconstruction, indicating the necessity of feature reconstruction. In particular, for social network datasets (i.e., IMDB and REDDIT datasets), the node features are annotated with node degree, hence the feature reconstruction is actually reconstructing the node degree, which is similar to NWR-GAE \cite{nwrgae} and has been verified to be effective for graph representation learning.

\subsubsection{Impact of Consistency and Complementarity}
Note that the key idea of the proposed method is modeling consistency and complementarity under the multiview setting. To investigate the impact of consistency and complementarity, we establish \mbox{two} variants, i.e., MVGIB w/o complementarity and MVGIB w/o consistency, and report the 10-fold cross-validation accuracy with the standard deviation. The experimental results are shown in Table \ref{tab22}. From this table, several observations can be received. First, compared with MVGIB w/o consistency, MVGIB w/o complementarity achieves better performance. In fact, MVGIB w/o complementarity is similar to MVGRL and can achieve comparable results. Second, despite the poor performance of MVGIB w/o consistency, modeling and integrating consistency can still yield significant performance improvements. 

\subsubsection{Impact of Different Views}
To evaluate the impact of different graph views, we utilize four input graph views including the original Adjacency matrix (\textsc{Adj}), K-Nearest Neighbor matrix (\textsc{Knn}), Dilated K-Nearest Neighbor matrix (\textsc{Dknn}), and Personalized PageRank (\textsc{Ppr}). Here, Dilated K-Nearest Neighbor aims to find the top similar $K-D$ nodes for each node where $D$ is the dilation factor. The 10-fold cross-validation accuracy with the standard deviation are shown in Table \ref{tab22}. From this table, we have the following observations. First, encoding from the \textsc{Ppr} input view and other input views obtains unsatisfactory performance. Second, encoding from \textsc{Adj} and KNN/DKNN input views achieves better performance on all datasets, indicating the effectiveness of integrating feature graph and topology graph for graph representation learning. 

\subsubsection{Impact of Trade-off Parameters}
The overall objective contains four trade-off parameters, i.e., $\delta$, $\eta$, $\zeta$, and $\xi$. For simplicity, we set $\beta = \delta = \eta$ and $\gamma = \zeta = \xi$. Then we vary $\beta$ and $\gamma$ in $\{0.1, 0.3, 0.5, 0.7, 1\}$ and conduct validation experiments on PTC-MR, PROTEINS, IMDB-BINARY, IMDB-MULTI, REDDIT-BINARY, and REDDIT-M5K datasets to study the impact of different trade-off parameters $\beta$ and $\gamma$. The 10-fold cross-validation results are shown in Figure \ref{graph1}. From this figure, we can empirically observe that the proposed MVGIB achieves better performance when $\beta$ and $\gamma$ are close or even equal, and vice versa. Hence, we simply set the trade-off parameters $\beta = \gamma = 1$ in the proposed method.

\subsubsection{Robustness on Edge Perturbation}
To evaluate the robustness of incorporating KNN or DKNN views in the proposed method, we perform the comparative experiments by adding and removing edges on PTC-MR dataset. Here, we randomly add and remove $25\%$, $50\%$, $75\%$ of original edges if edges exist or no such edges, and report the 10-fold cross-validation accuracy in Figure \ref{graph2}. From this figure, we observe that the proposed MVGIB outperforms baselines under different edge perturbation rates by a margin and the margin becomes larger as the perturbation rate increases, indicating the robustness of incorporating KNN or DKNN views. We argue that when the graph is perturbed by noisy edges, the KNN or DKNN graph still contains rich topology information of the original graph. Thus MVGIB can capture the critical information beneficial to the downstream tasks from the KNN or DKNN input view, improving topology robustness.

\section{Conclusion} \label{s6}
In this paper, we propose a novel Multiview Variational Graph Information Bottleneck (MVGIB) principle for graph representation learning. Specifically, to model the consistency across multiviews, we develop an information bottleneck objective that encourages the latent representation contains as much information as possible about the corresponding input view and as less possible about the other view. To model the complementarity, we also develop an information bottleneck objective to retain as much information as possible about the corresponding input view and the other view. We derive the lower and upper bounds of mutual information terms and instantiate the information objective by using graph neural networks. Extensive experiments on seven graph benchmark datasets verify the effectiveness of the proposed method. Currently, integrating more graph views is a challenge and requires further exploration in future work.

\bibliographystyle{IEEEtran}
\bibliography{reference}

\begin{thebibliography}{10}
\providecommand{\url}[1]{#1}
\csname url@samestyle\endcsname
\providecommand{\newblock}{\relax}
\providecommand{\bibinfo}[2]{#2}
\providecommand{\BIBentrySTDinterwordspacing}{\spaceskip=0pt\relax}
\providecommand{\BIBentryALTinterwordstretchfactor}{4}
\providecommand{\BIBentryALTinterwordspacing}{\spaceskip=\fontdimen2\font plus
\BIBentryALTinterwordstretchfactor\fontdimen3\font minus
  \fontdimen4\font\relax}
\providecommand{\BIBforeignlanguage}[2]{{%
\expandafter\ifx\csname l@#1\endcsname\relax
\typeout{** WARNING: IEEEtran.bst: No hyphenation pattern has been}%
\typeout{** loaded for the language `#1'. Using the pattern for}%
\typeout{** the default language instead.}%
\else
\language=\csname l@#1\endcsname
\fi
#2}}
\providecommand{\BIBdecl}{\relax}
\BIBdecl

\bibitem{liu2022nowhere}
B.~Liu, X.~Sun, Q.~Meng, X.~Yang, Y.~Lee, J.~Cao, J.~Luo, and R.~K.-W. Lee,
  ``Nowhere to hide: Online rumor detection based on retweeting graph neural
  networks,'' \emph{IEEE Transactions on Neural Networks and Learning Systems},
  2022.

\bibitem{fan2019graph}
W.~Fan, Y.~Ma, Q.~Li, Y.~He, E.~Zhao, J.~Tang, and D.~Yin, ``Graph neural
  networks for social recommendation,'' in \emph{Proceedings of the World Wide
  Web Conference}, 2019, pp. 417--426.

\bibitem{zhao2021weighted}
X.~Zhao, L.~Chen, and H.~Chen, ``A weighted heterogeneous graph-based dialog
  system,'' \emph{IEEE Transactions on Neural Networks and Learning Systems},
  2021.

\bibitem{yao2019graph}
L.~Yao, C.~Mao, and Y.~Luo, ``Graph convolutional networks for text
  classification,'' in \emph{Proceedings of the AAAI Conference on Artificial
  Intelligence}, vol.~33, no.~01, 2019, pp. 7370--7377.

\bibitem{zhou2021event}
Y.~Zhou, G.~Gallego, X.~Lu, S.~Liu, and S.~Shen, ``Event-based motion
  segmentation with spatio-temporal graph cuts,'' \emph{IEEE Transactions on
  Neural Networks and Learning Systems}, 2021.

\bibitem{zhang2019latentgnn}
S.~Zhang, X.~He, and S.~Yan, ``Latentgnn: Learning efficient non-local
  relations for visual recognition,'' in \emph{Proceedings of the International
  Conference on Machine Learning}, 2019, pp. 7374--7383.

\bibitem{scarselli2008graph}
F.~Scarselli, M.~Gori, A.~C. Tsoi, M.~Hagenbuchner, and G.~Monfardini, ``The
  graph neural network model,'' \emph{IEEE Transactions on Neural Networks},
  vol.~20, no.~1, pp. 61--80, 2008.

\bibitem{vgae}
T.~N. Kipf and M.~Welling, ``Variational graph auto-encoders,'' in
  \emph{Proceedings of the NeurIPS Workshop on Bayesian Deep Learning}, 2016.

\bibitem{mpnn}
J.~Gilmer, S.~S. Schoenholz, P.~F. Riley, O.~Vinyals, and G.~E. Dahl, ``Neural
  message passing for quantum chemistry,'' in \emph{Proceedings of the
  International conference on machine learning}, 2017, pp. 1263--1272.

\bibitem{gcn}
M.~Welling and T.~N. Kipf, ``Semi-supervised classification with graph
  convolutional networks,'' in \emph{Proceedings of the International
  Conference on Learning Representations}, 2017.

\bibitem{gin}
K.~Xu, W.~Hu, J.~Leskovec, and S.~Jegelka, ``How powerful are graph neural
  networks?'' in \emph{Proceedings of the International Conference on Learning
  Representations}, 2019.

\bibitem{park2020unsupervised}
C.~Park, D.~Kim, J.~Han, and H.~Yu, ``Unsupervised attributed multiplex network
  embedding,'' in \emph{Proceedings of the AAAI Conference on Artificial
  Intelligence}, vol.~34, no.~04, 2020, pp. 5371--5378.

\bibitem{hdmi}
B.~Jing, C.~Park, and H.~Tong, ``{HDMI}: High-order deep multiplex infomax,''
  in \emph{Proceedings of the World Wide Web Conference}, 2021, pp. 2414--2424.

\bibitem{liu2021self}
X.~Liu, F.~Zhang, Z.~Hou, L.~Mian, Z.~Wang, J.~Zhang, and J.~Tang,
  ``Self-supervised learning: Generative or contrastive,'' \emph{IEEE
  Transactions on Knowledge and Data Engineering}, 2021.

\bibitem{amgcn}
X.~Wang, M.~Zhu, D.~Bo, P.~Cui, C.~Shi, and J.~Pei, ``Am-gcn: Adaptive
  multi-channel graph convolutional networks,'' in \emph{Proceedings of the ACM
  SIGKDD International Conference on Knowledge Discovery and Data Mining},
  2020, pp. 1243--1253.

\bibitem{mvgrl}
K.~Hassani and A.~H. Khasahmadi, ``Contrastive multi-view representation
  learning on graphs,'' in \emph{Proceedings of the International Conference on
  Machine Learning}, 2020, pp. 4116--4126.

\bibitem{wan2021multi}
Z.~Wan, C.~Zhang, P.~Zhu, and Q.~Hu, ``Multi-view information-bottleneck
  representation learning,'' in \emph{Proceedings of the AAAI Conference on
  Artificial Intelligence}, vol.~35, no.~11, 2021, pp. 10\,085--10\,092.

\bibitem{oib}
N.~Tishby, F.~C. Pereira, and W.~Bialek, ``The information bottleneck method,''
  in \emph{Proceedings of the Annual Allerton Conference on Communication,
  Control and Computing}, 2000, pp. 368--377.

\bibitem{dvib}
A.~A. Alemi, I.~Fischer, J.~V. Dillon, and K.~Murphy, ``Deep variational
  information bottleneck,'' in \emph{Proceedings of the International
  Conference on Learning Representations}, 2017.

\bibitem{gib}
T.~Wu, H.~Ren, P.~Li, and J.~Leskovec, ``Graph information bottleneck,''
  \emph{Proceedings of the Advances in Neural Information Processing Systems},
  vol.~33, pp. 20\,437--20\,448, 2020.

\bibitem{sun2022graph}
Q.~Sun, J.~Li, H.~Peng, J.~Wu, X.~Fu, C.~Ji, and S.~Y. Philip, ``Graph
  structure learning with variational information bottleneck,'' in
  \emph{Proceedings of the AAAI Conference on Artificial Intelligence},
  vol.~36, no.~4, 2022, pp. 4165--4174.

\bibitem{sib}
J.~Yu, T.~Xu, Y.~Rong, Y.~Bian, J.~Huang, and R.~He, ``Graph information
  bottleneck for subgraph recognition,'' in \emph{Proceedings of the
  International Conference on Learning Representations}, 2021.

\bibitem{poole2019variational}
B.~Poole, S.~Ozair, A.~Van Den~Oord, A.~Alemi, and G.~Tucker, ``On variational
  bounds of mutual information,'' in \emph{Proceedings of the International
  Conference on Machine Learning}, 2019, pp. 5171--5180.

\bibitem{club}
P.~Cheng, W.~Hao, S.~Dai, J.~Liu, Z.~Gan, and L.~Carin, ``{CLUB}: A contrastive
  log-ratio upper bound of mutual information,'' in \emph{Proceedings of the
  International Conference on Machine Learning}, 2020, pp. 1779--1788.

\bibitem{propagation}
X.~Fan, M.~Gong, Y.~Wu, A.~Qin, and Y.~Xie, ``Propagation enhanced neural
  message passing for graph representation learning,'' \emph{IEEE Transactions
  on Knowledge and Data Engineering}, 2021.

\bibitem{fan2021deep}
X.~Fan, M.~Gong, Z.~Tang, and Y.~Wu, ``Deep neural message passing with
  hierarchical layer aggregation and neighbor normalization,'' \emph{IEEE
  Transactions on Neural Networks and Learning Systems}, 2021.

\bibitem{dgi}
P.~Velickovic, W.~Fedus, W.~L. Hamilton, P.~Lio, Y.~Bengio, and R.~D. Hjelm,
  ``Deep graph infomax,'' in \emph{Proceedings of the International Conference
  on Learning Representations}, 2019.

\bibitem{infograph}
F.-Y. Sun, J.~Hoffman, V.~Verma, and J.~Tang, ``{InfoGraph}: Unsupervised and
  semi-supervised graph-level representation learning via mutual information
  maximization,'' in \emph{Proceedings of the International Conference on
  Learning Representations}, 2020.

\bibitem{2012Subgraph}
N.~Kriege and P.~Mutzel, ``Subgraph matching kernels for attributed graphs,''
  in \emph{Proceedings of the International Conference on Machine Learning},
  2012, pp. 291--298.

\bibitem{2005Protein}
K.~M. Borgwardt, O.~C. Soon, S.~Stefan, S.~Vishwanathan, A.~J. Smola, and
  K.~Hans-Peter, ``Protein function prediction via graph kernels,''
  \emph{Bioinformatics}, pp. 47--56, 2005.

\bibitem{dgk}
P.~Yanardag and S.~Vishwanathan, ``Deep graph kernels,'' in \emph{Proceedings
  of the ACM SIGKDD International Conference on Knowledge Discovery and Data
  Mining}, no.~10, 2015, pp. 1365--1374.

\bibitem{spk}
K.~M. Borgwardt and H.-P. Kriegel, ``Shortest-path kernels on graphs,'' in
  \emph{Proceedings of the IEEE International Conference on Data Mining}, 2005,
  pp. 1--8.

\bibitem{rwk}
S.~V.~N. Vishwanathan, N.~N. Schraudolph, R.~Kondor, and K.~M. Borgwardt,
  ``Graph kernels,'' \emph{Journal of Machine Learning Research}, vol.~11, pp.
  1201--1242, 2010.

\bibitem{wlk}
N.~Shervashidze, P.~Schweitzer, E.~J. Van~Leeuwen, K.~Mehlhorn, and K.~M.
  Borgwardt, ``Weisfeiler-lehman graph kernels.'' \emph{Journal of Machine
  Learning Research}, vol.~12, no.~9, 2011.

\bibitem{graphlet}
N.~Shervashidze, S.~Vishwanathan, T.~Petri, K.~Mehlhorn, and K.~Borgwardt,
  ``Efficient graphlet kernels for large graph comparison,'' in
  \emph{Proceedings of the Artificial Intelligence and Statistics}, 2009, pp.
  488--495.

\bibitem{elu}
D.-A. Clevert, T.~Unterthiner, and S.~Hochreiter, ``Fast and accurate deep
  network learning by exponential linear units (elus),'' in \emph{Proceedings
  of the International Conference on Learning Representations}, 2016.

\bibitem{batchnorm}
S.~Ioffe and C.~Szegedy, ``Batch normalization: Accelerating deep network
  training by reducing internal covariate shift,'' in \emph{Proceedings of the
  International Conference on Machine Learning}, 2015, pp. 448--456.

\bibitem{adam}
D.~P. Kingma and J.~Ba, ``Adam: A method for stochastic optimization,'' in
  \emph{Proceedings of the International Conference on Learning
  Representations}, 2014.

\bibitem{libsvm}
C.-C. Chang and C.-J. Lin, ``Libsvm: A library for support vector machines,''
  \emph{ACM Transactions on Intelligent Systems and Technology}, vol.~2, no.~3,
  pp. 1--27, 2011.

\bibitem{nwrgae}
M.~Tang, P.~Li, and C.~Yang, ``Graph auto-encoder via neighborhood wasserstein
  reconstruction,'' in \emph{Proceedings of the International Conference on
  Learning Representations}, 2022.

\end{thebibliography}

\end{document}